\documentclass[lettersize,journal]{IEEEtran}

\IEEEoverridecommandlockouts                              

\usepackage{graphicx}
\usepackage{cite}
\usepackage{amsmath}
\usepackage{adjustbox}
\usepackage{array}
\usepackage{subfigure}
\usepackage{multicol}
\usepackage{float}
\usepackage{setspace}
\usepackage{bm}
\usepackage{amsfonts}
\usepackage{amsthm}
\usepackage{makecell}
\usepackage{multirow}
\usepackage[hidelinks]{hyperref}
\usepackage{bm}
\usepackage{color}
\allowdisplaybreaks

\newtheorem{lem}{Lemma}
\newtheorem{rem}{Remark}

\newtheorem{prop}{Proposition}
\newtheorem{assump}{Assumption}

\title{\LARGE \bf
Longitudinal-Motion-Aware Lateral Control for Autonomous Vehicles: A Robust Nonlinear Control Framework
}

\author{Sixu Li,~\IEEEmembership{Student Member,~IEEE}, Nitesh Kumar, Reyshwanth Ganeshan,  Sivakumar Rathinam,~\IEEEmembership{Senior Member,~IEEE}, Swaroop Darbha,~\IEEEmembership{Fellow,~IEEE}, Yang Zhou,~\IEEEmembership{Member,~IEEE}
\thanks{(Corresponding authors: Swaroop Darbha, Yang Zhou)}
\thanks{The authors are with Texas A$\&$M University, College Station, TX 77843, USA (e-mail: sixuli@tamu.edu; niteshk@tamu.edu; reyshwanth@tamu.edu; srathinam@tamu.edu; dswaroop@tamu.edu; yangzhou295@tamu.edu).}}%

\begin{document}
\maketitle
\thispagestyle{empty}
\pagestyle{empty}

\begin{abstract}
As autonomous vehicles (AVs) operate in increasingly dynamic traffic conditions, lateral control must be performed while longitudinal speed and acceleration vary. Yet many existing lateral controllers rely on constant-speed or operating-point-based assumptions, which can degrade performance during transient longitudinal maneuvers. Moreover, most methods assume precisely known vehicle parameters, despite real-world parametric uncertainties. To address these limitations, this paper presents a longitudinal-motion-aware robust nonlinear lateral control framework for AVs. It first derives a tracking error model that depends on varying longitudinal speed and acceleration. Using this model, feedback linearization is employed to obtain a linear input–output relation for lateral error tracking while embedding longitudinal motion into the control law. The resulting internal dynamics are then analyzed to ensure overall system stability. To address parameter uncertainty, two robust control designs with distinct implementation trade-offs are proposed: (i) a Lyapunov redesign (LR) approach inspired by sliding mode control, and (ii) an incremental nonlinear dynamic inversion (INDI) method. Both are rigorously analyzed and proven to ensure ultimate boundedness, with key robustness-tuning parameters explicitly identified. Simulations demonstrate enhanced tracking accuracy, consistent performance across varying speeds and accelerations, and robustness to model uncertainties, while also examining the effects of the robustness-related parameters. Real-vehicle tests further confirm real-time implementation and practical path-tracking performance on actual hardware.
\end{abstract}

\begin{IEEEkeywords}
Lateral Control, Nonlinear Dynamics and Robustness, Optimization and control, Autonomous driving.
\end{IEEEkeywords}

\section{Introduction}
\IEEEPARstart{T}{he} rapid advancement of autonomous vehicle (AV) technologies can be traced back to early initiatives such as California PATH \cite{swaroop1994string,peng1992vehicle,rajamani2000demonstration}, with a notable boost from competitions like DARPA \cite{buehler2009darpa}. The field remains active and has expanded considerably, largely due to AVs' potential to enhance traffic safety \cite{rajamani2011vehicle,li2026lateral} and smooth traffic flow \cite{li2025nonlinear,stern2018dissipation}. Among the core modules of an AV system, vehicle control plays a central role by converting planned motion into physically realizable actions while maintaining stability and safety. It is commonly divided into longitudinal control \cite{swaroop1994string,stern2018dissipation,li2024sequencing}, which regulates speed and spacing, and lateral (or path-tracking) control \cite{liu2020lateral,peng1992vehicle,han2024uniform}, which governs steering and path-tracking behavior. This paper focuses on lateral control, addressing the problem of accurately tracking a desired path.

In realistic driving scenarios, lateral control must be performed while longitudinal speed and acceleration vary \cite{li2024sequencing,du2011velocity}, rather than remain constant. At the same time, practical implementation is affected by uncertainty in vehicle parameters such as mass, yaw inertia, and tire cornering stiffness, arising from variable loading and road conditions \cite{zhang2025adaptive}. These two factors together make it important to develop lateral control methods that explicitly account for longitudinal motion variation while remaining robust to parameter mismatches.

\subsection{Related Work and Motivation}
A standard foundation for lateral control design is the bicycle dynamics model \cite{rajamani2011vehicle,liu2020lateral}, known for its simplicity and ability to effectively capture essential lateral and yaw dynamics. This model characterizes vehicle lateral dynamics with two degrees of freedom: lateral motion and yaw rotation. From the bicycle model, tracking error models are developed \cite{peng1992vehicle,tagne2015design}, using the steering angle as the control input while treating the desired yaw rate as an exogenous signal.

To simplify controller design, tracking error models are often reduced to \textit{linear time-invariant} (LTI) form by assuming a \textit{constant} longitudinal speed. While this assumption eases analysis and controller synthesis, it limits realism because vehicle speeds typically vary in real traffic \cite{tian2025physically,li2025nonlinear}. This is evident in Fig.~\ref{fig: ngsim}, which shows NGSIM I-80 trajectories with longitudinal speeds ranging from about 0 to 60 ft/s. To enhance practicality within an LTI setting, methods such as gain scheduling \cite{peng1992vehicle}, frozen-parameter stabilization over multiple constant speeds \cite{liu2020lateral}, and passivity-based design based on the LTI model \cite{tagne2015design} have been proposed. However, these methods do not explicitly address \textit{transient} speed changes, which can degrade lateral tracking and even induce oscillations.
\begin{figure}[htb!]
    \centering
    \includegraphics[width = 0.99\linewidth]{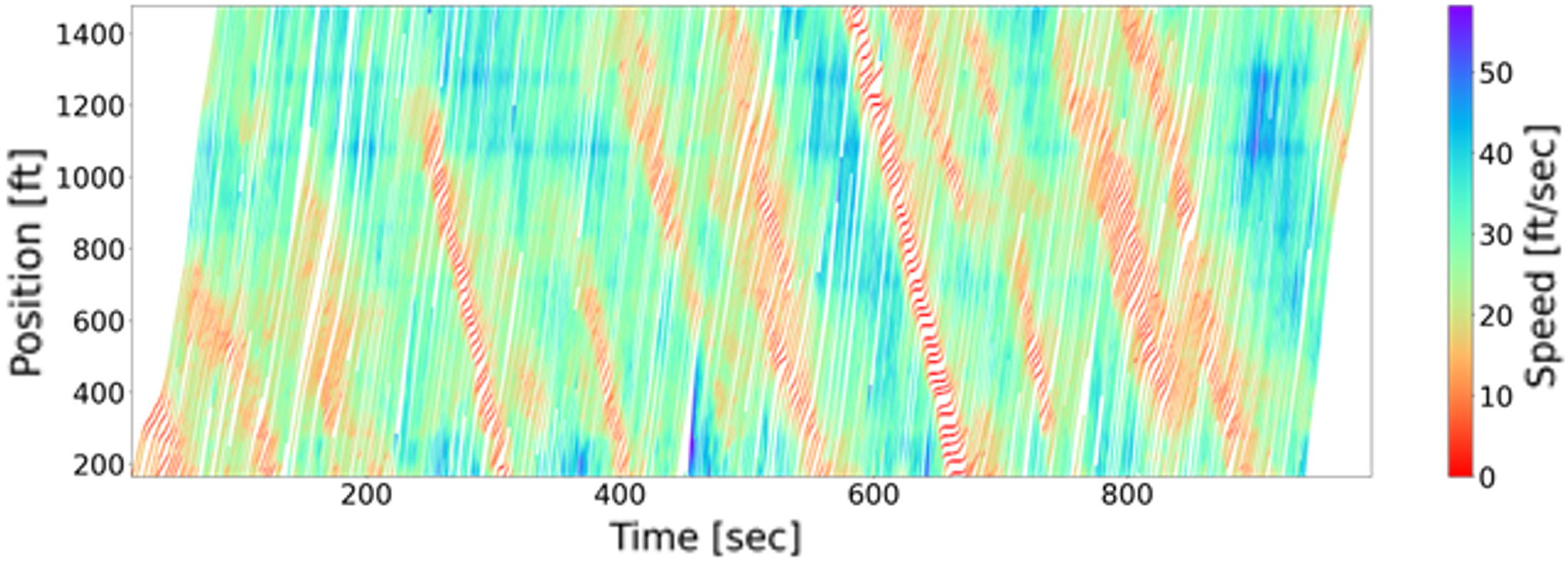}
    \caption{Recorded vehicle trajectories on the I-80 freeway}
    \label{fig: ngsim}
\end{figure}

To address this limitation, numerous studies have modeled the tracking error dynamics as a \textit{linear time-varying} (LTV) system. Representative approaches include Lyapunov-based stabilization for varying longitudinal speed \cite{liu2020robust,rajamani2011vehicle}, robust gain scheduling \cite{zhang2015vehicle,cao2023gain}, and LMI-based controller design \cite{du2011velocity}. Despite these advances, two main drawbacks remain: (i) the LTV model is typically obtained by treating the constant speed in the LTI formulation as a time-varying parameter, without revisiting the modeling simplifications introduced by the constant-speed assumption; and (ii) longitudinal speed is typically treated either as an uncertain parameter within a prescribed range or as a variable defining the controller operating point, rather than being explicitly incorporated into the control law, which may compromise performance.

These limitations mainly arise from enforcing a control-oriented linear model to keep standard linear design tools applicable. Yet, for the tracking error models widely used in the literature \cite{darbha2025robust,jiang2018lateral}, treating longitudinal speed as an external input implies that the error dynamics are inherently \textit{nonlinear}, since the state equations contain terms that depend jointly on the states and the input. This suggests adopting nonlinear control techniques \cite{khalil2002nonlinear}. In this direction, Jiang and Astolfi proposed backstepping- and forwarding-based lateral controllers \cite{jiang2018lateral}, while Tagne et al. explored passivity-based, sliding-mode, and immersion-and-invariance designs \cite{tagne2015design}. Yet both studies assumed constant longitudinal speed and still treated the tracking-error system as LTI, despite the nonlinear coupling. Moreover, \cite{tagne2015design} applied input-output feedback linearization without examining the internal dynamics stability. In \cite{seo2022safety}, a differential-flatness-based method was presented, but it relies on a lateral-speed integration reference that is difficult to obtain in practice, and its stability was not established.

In practice, uncertainties in vehicle parameters such as mass, moment of inertia, and tire cornering stiffness often arise due to variable load and road conditions \cite{darbha2025robust}. Although some nonlinear lateral control studies have addressed robustness against modeling uncertainty and disturbances, such treatment remains limited in the literature. For example, robust nonlinear controllers were employed in \cite{tagne2015design}, but the analysis, as mentioned before, relied on the constant-speed assumption and treated the model as LTI. Internal dynamics were also not analyzed, leaving a comprehensive robustness analysis under varying longitudinal motion unaddressed. Related nonlinear work has also considered disturbance-rejection and constraint-handling through observer- and Barrier Lyapunov Function-based designs \cite{hwang2020robust}, but still under constant-speed formulations rather than a longitudinal-motion-aware tracking framework.


\subsection{Proposed Framework and Contributions}
To address the gaps, this paper presents a robust nonlinear lateral control framework with longitudinal motion awareness. We first derive a control-oriented tracking error model that explicitly accounts for varying longitudinal speed and acceleration. Based on this model, feedback linearization is applied to obtain a control law that directly incorporates longitudinal motion information, and the resulting internal dynamics are explicitly analyzed to ensure overall stability. 

To address parametric uncertainty, we propose two alternative robustification methods. The first is a Lyapunov redesign (LR) scheme that adds a corrective term, derived from the Lyapunov function, to the nominal control law; to reduce chattering \cite{khalil2002nonlinear} without inducing excessively aggressive control, we propose a sliding-mode-inspired LR structure. The second is an incremental nonlinear dynamic inversion (INDI) method, which reduces reliance on an exact model by substituting model-based terms with measurements of the second derivative of the lateral error. These methods entail different implementation trade-offs: LR depends more on parameter information, whereas INDI depends more on sensing.

Both methods are rigorously analyzed and shown to ensure ultimate boundedness, with the key robustness-shaping parameters identified. Simulation results demonstrate improved tracking accuracy and robustness, and further illustrate the effects of the robustness-shaping parameters. The proposed controllers are further validated on a real vehicle.

The main contributions of this paper are as follows:

1) A control-oriented tracking error model that explicitly accounts for varying longitudinal speed and acceleration.

2) A longitudinal-motion-aware lateral control design via feedback linearization, together with an explicit analysis of the resulting internal dynamics.

3) Two robustification methods, LR and INDI, for lateral control under parametric uncertainty, with different trade-offs in model dependence and sensing requirements.

4) Rigorous ultimate boundedness analysis for both LR and INDI. The LR design presents a novel sliding-mode-inspired structure, while the INDI analysis distinguishes itself from prior work by not relying on Jacobian linearization or perturbation boundedness assumptions.

5) Real-vehicle experimental validation on a Chevrolet Bolt EUV, demonstrating real-time implementability and practical path-tracking performance of the proposed controllers.

\subsection{Organization of the Paper}
The rest of this article is organized as follows. Section~\ref{sec2} presents the tracking error model with varying longitudinal motion. Section~\ref{sec3} presents the longitudinal-motion-aware feedback linearization and analyzes the stability of the internal dynamics. Section~\ref{sec4} presents the robust nonlinear controllers and their ultimate boundedness analysis. Section~\ref{sec5} reports simulation results, and Section~\ref{sec veh} presents real-vehicle experimental validation. Conclusions are given in Section~\ref{sec6}.

\section{Bicycle Model and Tracking Error Model}
\label{sec2}

This section first revisits the standard bicycle model used for lateral vehicle dynamics and then develops a control-oriented tracking-error model that explicitly accounts for variations in longitudinal speed and acceleration, which are often neglected under the widely used constant-speed assumption.
\subsection{Bicycle model}
This subsection briefly reviews the standard bicycle model and introduces the notation used. The bicycle model is a widely used abstraction that combines each axle into a single wheel, as depicted in Fig.~\ref{fig: bicycle_model}. Two reference frames are employed: an inertial frame and a body-fixed frame attached to the center of gravity (C.G.) at point $C$. We denote the unit vectors along the axes of the inertia and body frames as $I,J$ and $i,j$, respectively. Note that we depict $i$ and $j$ with a slight offset for visibility. The vehicle is represented as a rigid stick, with each axle pair simplified to a single tire. $\theta$ represents the heading angle of the vehicle and $\delta_f$ denotes the front-steering angle. $a$ and $b$ represent the distances from the C.G. to the front and rear axles, respectively.

\begin{figure}[htb!]
    \centering
    \includegraphics[width = 0.35\linewidth]{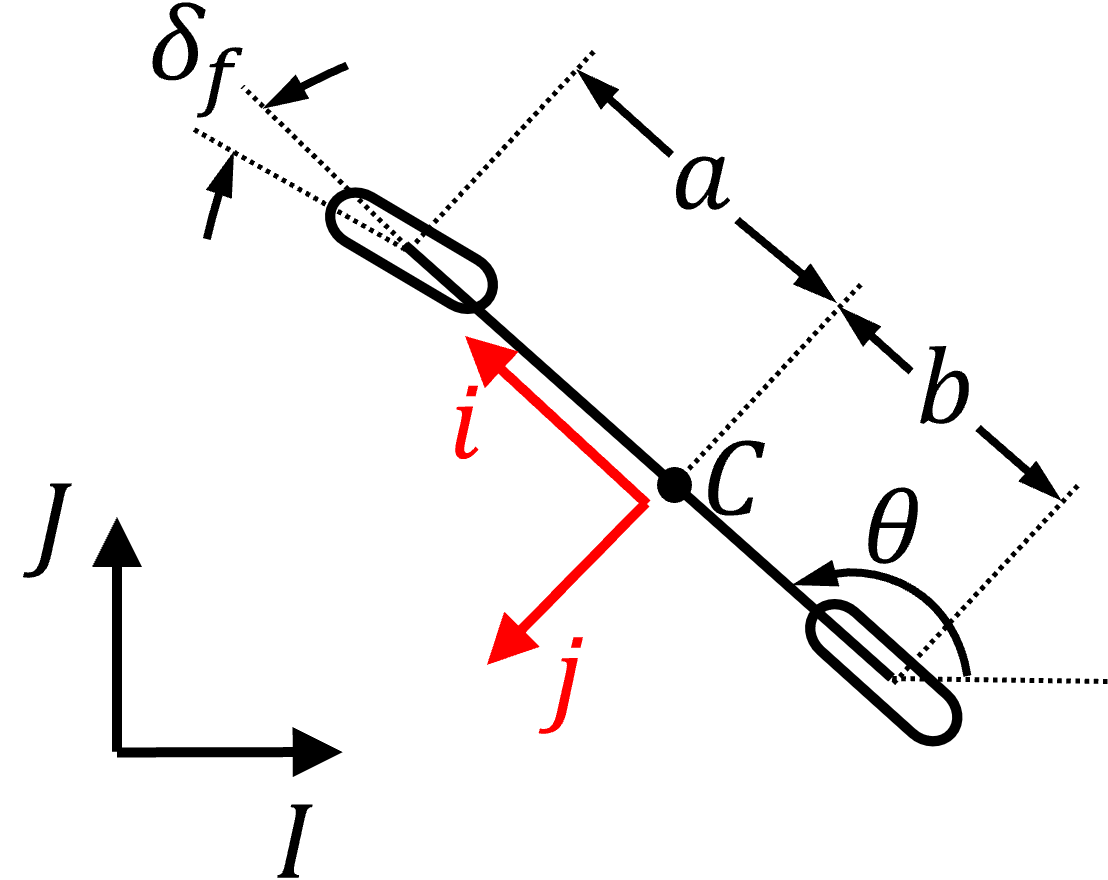}
    \caption{Bicycle model}
    \label{fig: bicycle_model}
\end{figure}

From Fig.~\ref{fig: bicycle_model}, the kinematic relationships are obtained:
\begin{align}
    i &= \cos{\theta} I + \sin{\theta} J, \quad j = -\sin{\theta} I + \cos{\theta} J, \label{eq: relationship_i_j} \\
    \tfrac{d i}{dt} &= \dot{\theta} \left(-\sin{\theta} I + \cos{\theta} J \right) = \dot{\theta} j, \quad \tfrac{d j}{dt} = -\dot{\theta} i. 
\end{align}
The vehicle velocity vector in the body frame is $\mathbf{v} = v_x i + v_y j,$ where $v_x$ and $v_y$ are the longitudinal and lateral speeds, respectively. The corresponding acceleration vector is:
\begin{align} \label{eq: expression_acceleration}
    \tfrac{d \mathbf{v}}{dt} &= \dot{v}_xi+v_x \tfrac{di}{dt} + \dot{v}_y j + v_y \tfrac{dj}{dt} = (\dot{v}_x- v_y \dot{\theta}) i + (\dot{v}_y + v_x \dot{\theta}) j.
\end{align}

Let $m$ and $I_z$ denote the vehicle mass and yaw moment of inertia, respectively. Let $C_f, C_r$ be the cornering stiffness of the front and rear tires. For small slip angles \cite{rajamani2011vehicle}, the lateral forces at the front ($F_f$) and rear ($F_r$) tires are given by:
\begin{align}
    F_f = C_f \alpha_f, \quad F_r = C_r \alpha_r,
\end{align}
where the slip angles, $\alpha_f$ and $\alpha_r$, are given by \cite{rajamani2011vehicle}
\begin{align}
    \alpha_f = \delta_f - \tfrac{v_y + a \dot{\theta}}{v_x}, \quad
    \alpha_r =  -\tfrac{v_y - b \dot{\theta}}{v_x}.
\end{align}

Using the lateral acceleration term from (\ref{eq: expression_acceleration}) and applying force and moment balance yields the  equations of motion:
\begin{equation}
\label{eq: equation_of_motion_1}
    m (\dot{v}_y + v_x \dot{\theta}) = F_f + F_r = C_f \delta_f  - \tfrac{C_f + C_r}{v_x} v_y - \tfrac{a C_f - b C_r}{v_x} \dot{\theta},
    \end{equation}
    \begin{equation}
\label{eq: equation_of_motion_2}
    I_z \Ddot{\theta} = a F_f - b F_r = a C_f \delta_f - \tfrac{a C_f - b C_r}{v_x} v_y - \tfrac{a^2 C_f + b^2 C_r}{v_x} \dot{\theta}.
\end{equation}

\subsection{Control-oriented tracking error dynamics model}

Building on the standard bicycle model, this subsection develops the control-oriented tracking-error model used in this work, distinguished from conventional forms by explicitly preserving variations in longitudinal speed and acceleration. For feedback controller design, expressing the motion equations in terms of position and heading errors relative to a desired path is beneficial. As depicted in Fig.~\ref{fig: representation_heading_position_errors}, denote $(X_v, Y_v)$ as the vehicle's C.G. in the inertial frame, and $(X_0, Y_0)$ as its projection on the reference path such that the vector from $(X_0, Y_0)$ to $(X_v, Y_v)$ is orthogonal to the path tangent. Define $e_{lat}$ as the distance between these points, with $e_t$ and $e_n$ being the tangential and normal unit vectors at $(X_0, Y_0)$. Let $\theta_R$ represent the path heading at $(X_0, Y_0)$. From kinematics, $\dot{\theta}_R = v_x / R$, where $R$ is the instantaneous radius of curvature. For a straight path, $R = \infty$, resulting in $\dot{\theta}_R = 0$.

\begin{figure}[htb!]
    \centering
    \includegraphics[width = 0.55\linewidth]{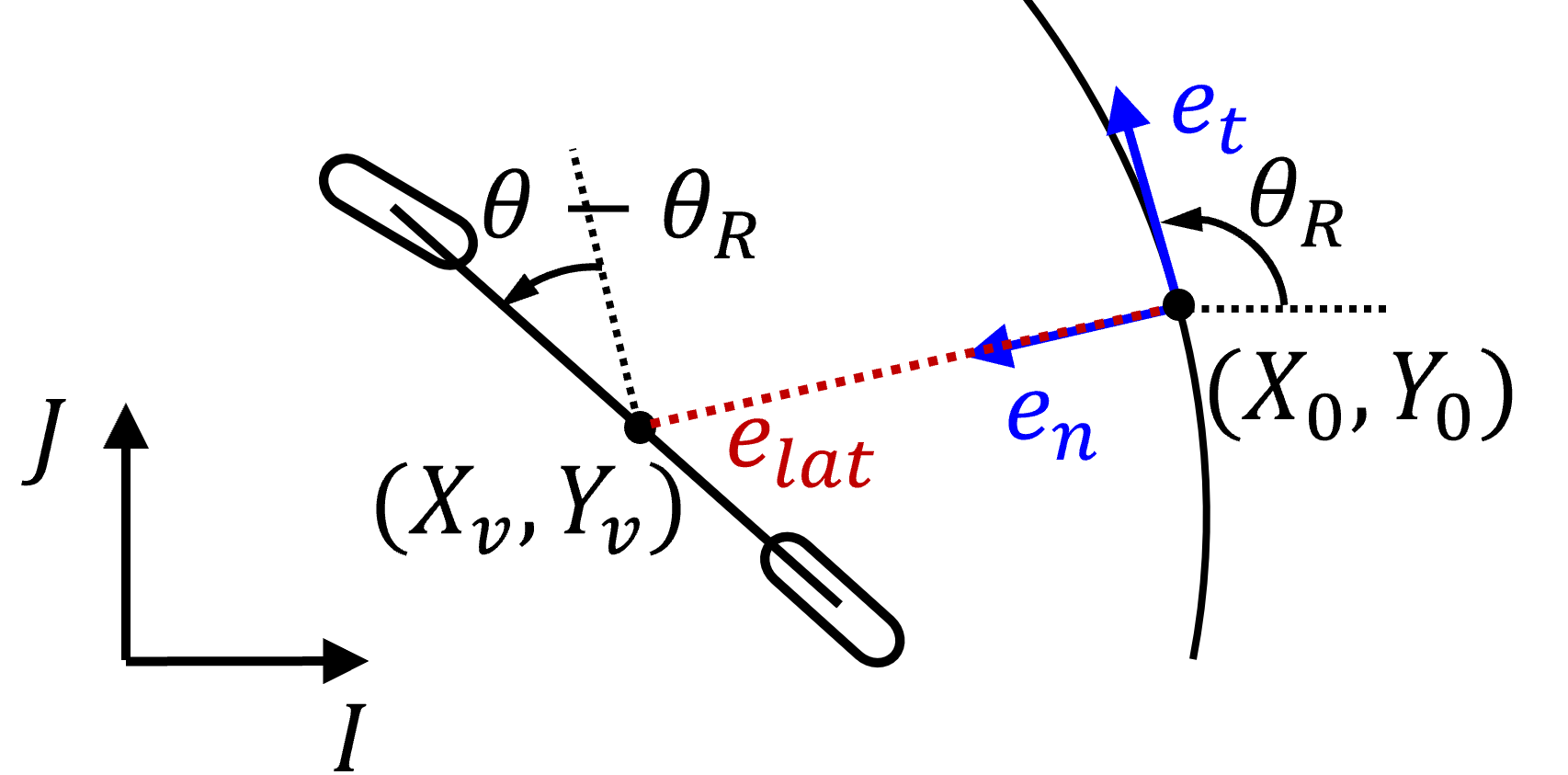}
    \caption{Representation of heading and position errors}
    \label{fig: representation_heading_position_errors}
\end{figure}

The coordinate transformation relating $e_t, e_n$ to $I, J$ is:
\begin{align} \label{eq: expression_et_en}
    e_t &= \cos \theta_R I + \sin \theta_R J, \quad e_n = -\sin \theta_R I + \cos \theta_R J. 
\end{align}
By applying the association of $i, j$ with $I, J$ in \eqref{eq: relationship_i_j}, one can derive the expression for $i, j$ in terms of $e_t, e_n$ as 
\begin{align}
    i &= \cos(\theta-\theta_R)e_t + \sin (\theta-\theta_R)e_n, \\
    j &=-\sin (\theta-\theta_R) e_t + \cos (\theta - \theta_R)e_n.
\end{align}

To obtain a control-oriented tracking-error model for path tracking, we adopt the following standard approximations widely used in the literature, e.g., \cite{rajamani2011vehicle,liu2020lateral}: (i) the heading error is assumed small, so $\sin(\theta-\theta_R)\approx\theta-\theta_R$ and $\cos(\theta-\theta_R)\approx1$; (ii) $\dot R$ is neglected in the  model; and (iii) quadratic and higher-order state terms are discarded. These assumptions are standard in vehicle path-tracking control and have been experimentally validated in previous work \cite{liu2020lateral}.

Lateral error rate correlates with the vehicle's speed component along the $e_n$ direction. Consequently, 
\begin{align} \label{eq: expression_elat_dot}
    {\dot e}_{lat} &= (v_x i + v_y j) \cdot e_n = v_x \sin (\theta-\theta_R) + v_y \cos (\theta-\theta_R) \notag\\
    &\qquad\qquad\qquad\qquad\approx v_x (\theta-\theta_R) + v_y, \\
    {\ddot e}_{lat} &= \tfrac{d}{dt} \left[\left(v_x i + v_y j \right) \cdot e_n \right] \notag\\
    &= \tfrac{d}{dt} \left(v_x i + v_y j \right) \cdot e_n + \left(v_x i + v_y j \right) \cdot \tfrac{d e_n}{d t} \notag\\
    &=  \left(\left(\dot{v}_x- v_y \dot{\theta}\right) i + \left(\dot{v}_y + v_x \dot{\theta} \right) j \right) \cdot e_n \notag\\
    & \quad\, + (v_x i + v_y j) \cdot (-{\dot \theta}_R e_t),
\end{align}
where the acceleration formula from \eqref{eq: expression_acceleration} and the derivative of $e_n$ from \eqref{eq: expression_et_en} were utilized. Noting that $j \cdot e_n = \cos (\theta - \theta_R),$ $i \cdot e_n = \sin (\theta-\theta_R),$ $i \cdot e_t = \cos (\theta-\theta_R),$ and $j \cdot e_t = -\sin (\theta-\theta_R)$, we obtain 
\begin{align} \label{eq: expression_elat_ddot}
    {\ddot e}_{lat} &= \left(\dot{v}_x- v_y \dot{\theta}\right) \sin{(\theta - \theta_R)}+\left(v_x \dot \theta + \dot v_y\right) \cos{(\theta - \theta_R)}  \notag\\
    & \quad\, - \left(v_x {\dot \theta}_R \cos (\theta-\theta_R) -v_y {\dot \theta}_R \sin (\theta-\theta_R) \right), \notag\\
    &\approx \dot v_x({\theta} - {\theta}_R)+v_x ({\dot\theta} - {\dot\theta}_R)+\dot v_y.
\end{align}
Define the heading error as ${\tilde \theta} := \theta - {\theta}_R$. Observe that $v_y$ and ${\dot v_y}$ can be expressed through \eqref{eq: expression_elat_dot} and \eqref{eq: expression_elat_ddot}, respectively, as 
\begin{align*}
    v_y &= {\dot e}_{lat} - v_x {\tilde \theta}, \quad
    \dot v_y = {\ddot e}_{lat} - v_x \dot {\tilde \theta}-\dot v_x {\tilde \theta}. 
\end{align*}
The equations of motion presented in \eqref{eq: equation_of_motion_1} and \eqref{eq: equation_of_motion_2} can be reformulated using the state variables, $e_{lat}$, $\dot e_{lat}$, $\tilde \theta$, $\dot{\tilde \theta}$ as
\begin{align}
     \Ddot{e}_{lat} &=  \underbrace{\left(\tfrac{C_f + C_r}{m}+\dot v_x \right) \tilde{\theta} -\tfrac{C_f + C_r}{mv_x} \dot{e}_{lat} - \tfrac{a C_f - b C_r}{mv_x}  \dot{\tilde \theta}}_{f_1} \notag\\
    & \quad\, + \underbrace{\tfrac{C_f}{m}}_{g_1} \delta_f \underbrace{- \left( v_x + \tfrac{a C_f - b C_r}{mv_x} \right)}_{d_1} \dot{\theta}_R, \label{eq: e_ddot_govern}
    \end{align}
    \begin{align}
    \Ddot{\tilde \theta} &= \underbrace{\tfrac{a C_f - b C_r}{I_z}  \tilde{\theta}-\tfrac{a C_f - b C_r}{I_zv_x}\dot{e}_{lat} - \tfrac{a^2 C_f + b^2 C_r}{I_zv_x}  \dot{\tilde \theta}}_{f_2} \notag\\
    & \quad\,  + \underbrace{\tfrac{a C_f}{I_z}}_{g_2} \delta_f  \underbrace{-\tfrac{a^2 C_f + b^2 C_r+I_z\dot v_x}{I_zv_x}}_{d_2} \dot{\theta}_R\label{eq: theta_ddot_govern}.
\end{align}
Here, $\dot{\theta}_R = \tfrac{v_x}{R}$ and $\ddot{\theta}_R = \tfrac{\dot v_x}{R}+v_x \tfrac{d}{dt} \left(\tfrac{1}{R} \right) = \tfrac{\dot v_x}{R}=\tfrac{\dot v_x}{v_x}\dot{\theta}_R$ were used. It should be noted that the expression for $\ddot{\theta}_R$ was obtained since $\dot R$ is omitted.
\begin{rem}
In the literature, terms involving $\dot v_x$ do not appear because longitudinal speed is assumed constant \cite{rajamani2011vehicle,liu2020lateral}. In contrast, such terms are retained in the present work because varying longitudinal motion is explicitly considered.
\end{rem}

The state-space representation of (\ref{eq: e_ddot_govern})-(\ref{eq: theta_ddot_govern}) is expressed as
\begin{equation}
    \mathbf{\dot x}=\mathbf{f}+\mathbf{g} \delta_f+\mathbf{d}\dot{\theta}_R, \label{eq: state_stpace_error}
\end{equation}
where $\mathbf{x}=[e_{lat},\dot e_{lat},\tilde \theta,\dot{\tilde \theta}]^T$, $\mathbf{f}=[\dot e_{lat},f_1,\dot{\tilde \theta},f_2]^T$, $\mathbf{g}=[0, g_1, 0, g_2]^T$, $\mathbf{d}=[0,d_1,0,d_2]^T$.

\section{Longitudinal-Motion-Aware Feedback Linearization and Internal Dynamics Analysis}
\label{sec3}
This section derives a longitudinal-motion-aware lateral control law using feedback linearization (see, e.g., Section 13 of \cite{khalil2002nonlinear}). We linearize the input–output map from steering angle \(\delta_f\) to lateral error \(e_{lat}\), explicitly incorporating longitudinal speed and acceleration into the control law while making the resulting external dynamics independent of them. We then analyze the stability of the associated internal dynamics, which become unobservable under input–output linearization. Parametric uncertainty is treated in the next section.

The following assumptions, which are common for controller design and stability analysis \cite{peng1990lateral,liu2020robust}, are introduced to facilitate the subsequent development and theoretical results.
\begin{assump}
\label{assump: parameters}
Each parameter \(z \in \{C_f, C_r, m, I_z\}\) is treated as an uncertain constant over the time scale of interest, with known bounds \(0 < z_{\min} \le z \le z_{\max}\). Moreover, each \(z\) has a nominal value \(\hat z\), determined via offline calibration or online identification. The geometric parameters \(a\) and \(b\) are taken as fixed, known, positive constants \cite{tagne2015design,shirazi2017mathcal}.
\end{assump}

\begin{rem}
When load variation is small relative to the nominal mass (e.g., passenger weight in a car), the resulting C.G. shift has minimal impact \cite{darbha2025robust}. For systems with large load-dependent C.G. changes (e.g., trucks with heavy cargo), one can instead use a model that explicitly accounts for it \cite{darbha2025robust} or update \(a\) and \(b\) via additional sensing or estimation.
\end{rem}

\begin{assump} 
\label{assump: longitudinal}
The measurable longitudinal speed and acceleration are assumed to be differentiable and bounded by $v_{x,min} \leq v_x(t) \leq v_{x,max}$ and $\dot{v}_{x,min} \leq \dot{v}_x(t) \leq \dot{v}_{x,max}$ for all $t\in\mathbb{R}$. As only forward motion is considered, $v_{x,\min} > 0$. Also, $\dot{v}_{x,\min} < 0<\dot{v}_{x,\max}$ is assumed to account for deceleration and acceleration. The path radius is assumed to satisfy $0<|R|_{min}\leq |R(t)|\leq\infty$ for all $t\in\mathbb{R}$. Consequently, $\dot\theta_R(t)=\tfrac{v_x(t)}{R(t)}$ satisfies $\dot\theta_{R,min}:=-\tfrac{v_{x,max}}{|R|_{min}}\leq\dot\theta_R(t)\leq\dot\theta_{R,max}:=\tfrac{v_{x,max}}{|R|_{min}}$ for all $t\in\mathbb{R}$.

\end{assump}

Following the standard process of feedback linearization, we proceed by taking the derivatives of the output $e_{lat}$ \cite{khalil2002nonlinear}:
\begin{align}
\label{eq: output_derivative_1}
    \tfrac{de_{lat}}{dt}&=\dot e_{lat}, \\
\label{eq: output_derivative_2}
    \tfrac{d^2 e_{lat}}{dt^2} &= \ddot e_{lat} =f_1+g_1\delta_f+d_1\dot{\theta}_R.
\end{align}
Since the control input $\delta_f$ appears explicitly in $\ddot e_{lat}$, the system (\ref{eq: state_stpace_error}) with $e_{lat}$ as output has a relative degree of 2. The external states can be selected as 
\begin{equation}
\label{eq: xi_def}
\boldsymbol{\xi}=[\xi_1, \xi_2]^T:=[e_{lat}, \dot e_{lat}]^T.
\end{equation}
If all parameters were precisely known, using
\begin{equation}
\label{eq: nonlinearity_cancellation_exact}
    \delta_f=\tfrac{1}{g_1}\left(-f_1-d_1\dot{\theta}_R-\mathbf{K}\boldsymbol{\xi}\right),
\end{equation}
where $\mathbf{K}=[k_1, k_2]$, the dynamics of the external states in \eqref{eq: output_derivative_1}-\eqref{eq: output_derivative_2} could be exactly linearized to
\begin{equation}
\label{eq: linearized_external_exact}
    \boldsymbol{\dot\xi}={\small\begin{bmatrix}
0 & 1 \\
-k_1 & -k_2
\end{bmatrix}}\boldsymbol{\xi}.
\end{equation}
\begin{rem}
From the expressions for \(f_1\), \(g_1\), and \(d_1\) in \eqref{eq: e_ddot_govern}, the feedback law in \eqref{eq: nonlinearity_cancellation_exact} explicitly incorporates longitudinal speed and acceleration, thereby making the controller longitudinal-motion-aware. Meanwhile, the external dynamics in \eqref{eq: linearized_external_exact} no longer depend on these quantities, allowing for more consistent tracking performance under varying longitudinal motion.
\end{rem}

Stabilizing values of \(k_1\) and \(k_2\) for the external dynamics are readily obtained by choosing \(k_1>0\) and \(k_2>0\), since the characteristic polynomial of \eqref{eq: linearized_external_exact} is \(s^2+k_2 s+k_1\), which is Hurwitz under these conditions. The impact of parametric uncertainties on the external dynamics will be addressed in the upcoming section. Presently, we aim to find the internal state in order to build a diffeomorphism for a useful coordinate transformation (see, for example, page 508 of \cite{khalil2002nonlinear}).

The internal state, $\boldsymbol{\eta}=[{\eta}_1, {\eta}_2]^T$, is chosen by solving
\begin{equation}
    \tfrac{\partial{\eta}_i}{\partial\mathbf{x}} \mathbf{g}=\tfrac{C_f}{m}\tfrac{\partial{\eta}_i}{\partial \dot e_{lat}}+\tfrac{aC_f}{I_z}\tfrac{\partial{\eta}_i}{\partial\dot{\tilde\theta}}=0, ~\text{for}~ i=1,2.
\end{equation}
An obvious solution to the above equation is
\begin{equation}
    \boldsymbol{\eta}=[{\eta}_1, {\eta}_2]^T:=\left[\tilde\theta, ma\dot e_{lat}-I_z\dot{\tilde\theta}\right]^T. \label{eq: internal state}
    \end{equation}
     The coordinate transformation 
     \begin{equation}
     \label{eq: coord_trans}
     \mathbf{T(\mathbf{x})}=[\boldsymbol{\xi}^T(\mathbf{x}), \boldsymbol{\eta}^T(\mathbf{x})]^T=\mathbf{M}\mathbf{x}
     \end{equation}
     is a linear operator with \begin{equation}
    \mathbf{M}={\small\begin{bmatrix}
        1&0&0&0\\
        0&1&0&0\\
        0&0&1&0\\
        0&ma&0&-I_z\\
    \end{bmatrix}}.
    \end{equation}
    Noting that $m,a,I_z$ are positive due to their physical meanings, it can be seen that (1) $\mathbf{M}$ is full rank for all $\mathbf{x}\in\mathbb{R}^4$. Furthermore, (2) $\displaystyle\lim_{\|x\|_2\to\infty}\|\mathbf{T}(\mathbf{x})\|_2\geq\lim_{\|x\|_2\to\infty}\sigma_{min}(\mathbf{M})\|x\|_2=\infty$, where $\sigma_{min}(\mathbf{M})$ is the smallest singular value of $\mathbf{M}$ and is nonzero since $\mathbf{M}$ is full rank. These conditions indicate that $\mathbf{T}(\mathbf{x})$ is a global diffeomorphism \cite{khalil2002nonlinear}.

We can now analyze the internal dynamics. By (\ref{eq: internal state}),
\begin{equation}
    \boldsymbol{\dot\eta}={\small\begin{bmatrix}
        \dot{\tilde\theta}\\
        maf_1-I_zf_2+(mad_1-I_zd_2)\dot{\theta}_R
    \end{bmatrix}}.
\end{equation}
By rearranging the equation above, we can rewrite it in terms of the transformed state variables $\boldsymbol{\xi}$ and $\boldsymbol{\eta}$, leading to
\begin{align}
\label{eq: internal dynamics}
\boldsymbol{\dot\eta}=&f_{\eta}(t,\boldsymbol{\eta},\boldsymbol{\xi},\dot{\theta}_R)\notag\\
=&\underbrace{\small\begin{bmatrix}
        0&-\tfrac{1}{I_z}\\
         (a+b)C_r+a m \dot v_x&-\tfrac{(a+b) bCr }{v_x I_z}
    \end{bmatrix}}_{\mathbf{A_\eta}(t)}\boldsymbol{\eta} \notag\\
    &+\underbrace{\small\begin{bmatrix}
        0&\tfrac{a m}{I_z}&0\\
         0&\tfrac{ (a+b) (a b m-I_z)C_r}{v_x I_z}&\tfrac{(a b +b^2 )C_r-a m v_x^2+I_z \dot v_x}{v_x}
    \end{bmatrix}}_{\mathbf{B}_\eta(t)}
    \begin{bmatrix}
    \boldsymbol{\xi}\\    
    \dot{\theta}_R
    \end{bmatrix},
\end{align}
wherein $\boldsymbol{\xi}$ and $\dot{\theta}_R$ are treated as inputs to the internal dynamics. We begin by analyzing the stability of the unforced dynamics $f_\eta(t,\boldsymbol{\eta},0,0)$, which extends the concept of zero dynamics \cite{khalil2002nonlinear}. Since $v_x$ and $\dot v_x$ vary, $\boldsymbol{\dot\eta}=f_\eta(t,\boldsymbol{\eta},0,0)=\mathbf{A_\eta}(t)\boldsymbol{\eta}$ forms an LTV system. A key result from \cite{ioannou1996robust} is utilized to establish the global exponential stability of this LTV system.

\begin{lem}(Theorem 3.4.11 in \cite{ioannou1996robust})
\label{lem: LTV_stability}
Let the elements of $\mathbf{A}(t)$ in $\mathbf{\dot z}=\mathbf{A}(t)\mathbf{z}$ be differentiable and bounded functions of time and assume that 
\begin{itemize}
\item For some constant $\sigma_s>0$, all eigenvalues of $\mathbf{A}(t)$ have real parts less than or equal to $-\sigma_s$ $\forall t\geq0$.
\end{itemize}
\begin{itemize}
  \item[(i)] If for some finite $\rho>0$, $\int_0^\infty \| {\mathbf{\dot A}}(\tau)\|^2d\tau\leq\rho$, then the equilibrium state $\boldsymbol{\eta} = 0$ is globally exponentially stable.
  \item[(ii)] If any one of the following conditions holds:
  \begin{itemize}
    \item[(a)] $ \int_t^{t+T} \|{\mathbf{\dot A}}(\tau)\| d\tau \leq \mu T + \alpha_0,$
    \item[(b)] $ \int_t^{t+T} \|{\mathbf{\dot A}}(\tau)\|^2 d\tau \leq \mu^2 T + \alpha_0,$
    \item[(c)] $ \|{\mathbf{\dot A}}(t)\| \leq \mu, $
  \end{itemize}
\end{itemize}
for some $\alpha_0, \mu \in \mathbb{R}^+$ and $\forall t \geq 0, T \geq 0$, then there exists a $\mu^* > 0$ such that if $\mu \in [0, \mu^*)$, the equilibrium state $\mathbf{z}=0$ is globally exponentially stable.
\end{lem}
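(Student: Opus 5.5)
The plan is the classical frozen-time Lyapunov argument for slowly varying LTV systems. For each fixed $t$ I will define $\mathbf{P}(t)$ as the unique symmetric solution of the Lyapunov equation
\begin{equation*}
\mathbf{A}^T(t)\mathbf{P}(t)+\mathbf{P}(t)\mathbf{A}(t)=-\mathbf{I}.
\end{equation*}
The analysis then uses the time-varying quadratic form $V(t,\mathbf{z})=\mathbf{z}^T\mathbf{P}(t)\mathbf{z}$.

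\textbf{Step 1 (uniform bounds on $\mathbf{P}$).} I will write $\mathbf{P}(t)=\int_0^\infty e^{\mathbf{A}^T(t)s}e^{\mathbf{A}(t)s}\,ds$. Since $\|\mathbf{A}(t)\|\le a_0$ and every eigenvalue satisfies $\mathrm{Re}\,\lambda\le-\sigma_s$, a uniform bound $\|e^{\mathbf{A}(t)s}\|\le k e^{-\sigma s}$ holds for some $k>0$ and $0<\sigma<\sigma_s$ independent of $t$. This is the standard compactness argument: the set of such matrices is compact, or one can use a Schur/resolvent estimate. It yields $c_1\mathbf{I}\le\mathbf{P}(t)\le c_2\mathbf{I}$. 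The lower bound comes from $\mathbf{P}\ge\int_0^\infty e^{-2a_0 s}ds\,\mathbf{I}$.

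\textbf{Step 2 (bound on $\dot{\mathbf{P}}$).} Differentiating the Lyapunov equation gives
\begin{equation*}
\mathbf{A}^T\dot{\mathbf{P}}+\dot{\mathbf{P}}\mathbf{A}=-(\dot{\mathbf{A}}^T\mathbf{P}+\mathbf{P}\dot{\mathbf{A}}).
\end{equation*}
By the integral representation of Step 1, this implies $\|\dot{\mathbf{P}}(t)\|\le c_3\|\dot{\mathbf{A}}(t)\|$ with $c_3$ depending only on $k,\sigma,c_2$.

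\textbf{Step 3 (differential inequality).} Along trajectories,
\begin{equation*}
\dot V=-\|\mathbf{z}\|^2+\mathbf{z}^T\dot{\mathbf{P}}\mathbf{z}\le-\big(c_2^{-1}-c_3c_1^{-1}\|\dot{\mathbf{A}}(t)\|\big)V,
\end{equation*}
so that
\begin{equation*}
V(t)\le V(t_0)\exp\!\Big(-\tfrac{t-t_0}{c_2}+\tfrac{c_3}{c_1}\int_{t_0}^t\|\dot{\mathbf{A}}\|d\tau\Big).
\end{equation*}
The conclusion then follows case by case.

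For case (i), I bound $\int_{t_0}^t\|\dot{\mathbf{A}}\|\le\sqrt{\rho(t-t_0)}$ by Cauchy--Schwarz. Then $\sqrt{\rho(t-t_0)}\le\epsilon(t-t_0)+\rho/(4\epsilon)$, and choosing $\epsilon$ small makes the exponent at most $-\gamma(t-t_0)+$const. This gives exponential decay of $V$, and hence of $\|\mathbf{z}\|$ via Step 1. Global exponential stability follows by linearity.

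For case (ii)(a), the exponent is at most $-(c_2^{-1}-c_3c_1^{-1}\mu)(t-t_0)+c_3c_1^{-1}\alpha_0$. Setting $\mu^*=c_1/(c_2c_3)$ therefore suffices. Case (ii)(c) is a special case of (a) with $\alpha_0=0$. For (ii)(b), Cauchy--Schwarz on windows of length $T$ gives $\int\|\dot{\mathbf{A}}\|\le\sqrt{T(\mu^2T+\alpha_0)}\le\mu T+\sqrt{\alpha_0 T}$. The term $\sqrt{\alpha_0 T}$ is absorbed as in case (i), reducing (b) to (a) with a slightly smaller $\mu^*$.

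\textbf{Main obstacle.} I expect Step 1 to be the hardest, specifically the uniformity of $\|e^{\mathbf{A}(t)s}\|\le ke^{-\sigma s}$ in $t$. Eigenvalue bounds alone do not control transient growth. I would first try the Cauchy integral representation
\begin{equation*}
e^{\mathbf{A}s}=\tfrac{1}{2\pi i}\oint_\Gamma e^{\lambda s}(\lambda\mathbf{I}-\mathbf{A})^{-1}d\lambda.
\end{equation*}
Here $\Gamma$ is a fixed contour lying in $\{\mathrm{Re}\lambda\le-\sigma\}$ and enclosing the disc $|\lambda|\le a_0$ intersected with $\{\mathrm{Re}\lambda\le-\sigma_s\}$. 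The resolvent is bounded on $\Gamma$ uniformly in $t$ by the compactness of the admissible set of matrices. Differentiability of $\mathbf{A}$ is used only to make $\dot{\mathbf{P}}$ well defined in Step 2.
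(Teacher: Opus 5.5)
The paper does not prove this lemma itself; it imports it from Theorem 3.4.11 of Ioannou and Sun, whose proof is the frozen-time argument you outline: $V=\mathbf{z}^T\mathbf{P}(t)\mathbf{z}$ with $\mathbf{A}^T(t)\mathbf{P}(t)+\mathbf{P}(t)\mathbf{A}(t)=-\mathbf{I}$, the bound $\|\dot{\mathbf{P}}\|\le c\|\dot{\mathbf{A}}\|$ from the differentiated Lyapunov equation, integration of $\dot V\le-(c_2^{-1}-c_3c_1^{-1}\|\dot{\mathbf{A}}\|)V$, and Cauchy--Schwarz to reduce cases (i) and (ii)(b) to linear growth of $\int\|\dot{\mathbf{A}}\|$. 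Your proposal is correct and takes essentially that route, and you rightly identify the uniformity in $t$ of $\|e^{\mathbf{A}(t)s}\|\le ke^{-\sigma s}$ as the one point needing care, which your compactness/resolvent argument handles.
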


We can establish the conditions of global exponential stability for the unforced internal dynamics using Lemma \ref{lem: LTV_stability}.
\begin{prop}
\label{prop: internal GES}
    For the unforced internal dynamics $\boldsymbol{\dot\eta}=f_\eta(t,\boldsymbol{\eta},0,0)=\mathbf{A_\eta}(t)\boldsymbol{\eta}$, assume that: 
    
    \begin{itemize}
    \item[(A1)] $(a+b)C_r>am|\dot v_{x,min}|$.
    \end{itemize}
    Then,
    \begin{itemize}
    \item[(i)] If for some finite $\rho>0$, $\int_0^\infty \left(\dot v^2_x(t)+\ddot v^2_x(t)\right)d\tau\leq\rho$, then the equilibrium state $\boldsymbol{\eta} = 0$ is globally exponentially stable.
  \item[(ii)] If any one of the following conditions holds:
  \begin{itemize}
    \item[(a)] $ \int_t^{t+T} \left(\left|\dot v_x(\tau)\right|+\left|\ddot v_x(\tau)\right|\right) d\tau \leq \mu T + \alpha_0,$
    \item[(b)] $ \int_t^{t+T} \left(\dot v^2_x(\tau)+\ddot v^2_x(\tau)\right) d\tau \leq \mu^2 T + \alpha_0,$
    \item[(c)] $ \left|\dot v_x(t)\right|+\left|\ddot v_x(t)\right| \leq \mu, $
  \end{itemize}
\end{itemize}
for some $\alpha_0, \mu \in \mathbb{R}^+$ and $\forall t \geq 0, T \geq 0$, then there exists a $\mu^* > 0$ such that if $\mu \in [0, \mu^*)$, the equilibrium state $\boldsymbol{\eta}=0$ is globally exponentially stable.
\end{prop}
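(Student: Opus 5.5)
The plan is to check the hypotheses of Lemma~\ref{lem: LTV_stability} for $\mathbf{A}(t)=\mathbf{A_\eta}(t)$, and then translate its conditions on $\dot{\mathbf{A}}$ into the stated conditions on $\dot v_x$ and $\ddot v_x$.

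\textbf{Regularity.} By Assumption~\ref{assump: longitudinal}, $v_x$ is differentiable with $v_x\ge v_{x,min}>0$, so the entry $-\tfrac{(a+b)bC_r}{v_xI_z}$ is differentiable and bounded. The entry $(a+b)C_r+am\dot v_x$ is bounded because $\dot v_x$ is bounded. It is differentiable as long as $\ddot v_x$ exists, which is implicit in the hypotheses since they involve $\ddot v_x$; I would note this explicitly.

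\textbf{Uniform eigenvalue bound.} The characteristic polynomial of $\mathbf{A_\eta}(t)$ is
\begin{equation*}
s^2+\tfrac{(a+b)bC_r}{v_xI_z}s+\tfrac{(a+b)C_r+am\dot v_x}{I_z}.
\end{equation*}
The linear coefficient satisfies
\begin{equation*}
p(t):=\tfrac{(a+b)bC_r}{v_xI_z}\ge p_{min}:=\tfrac{(a+b)bC_r}{v_{x,max}I_z}>0.
\end{equation*}
For the constant coefficient $q(t)=\tfrac{(a+b)C_r+am\dot v_x}{I_z}$, note that $\dot v_x\ge \dot v_{x,min}=-|\dot v_{x,min}|$. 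Assumption (A1) then gives
\begin{equation*}
q(t)\ge q_{min}:=\tfrac{(a+b)C_r-am|\dot v_{x,min}|}{I_z}>0.
\end{equation*}
Since $\dot v_x\le\dot v_{x,max}$, $q$ is also bounded above by some $q_{max}$, and $p\le p_{max}$.

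Both roots of $s^2+ps+q$ with $p,q>0$ have negative real parts. I would get a uniform margin by cases:
\begin{itemize}
\item If the roots are complex, the real part is $-p/2\le -p_{min}/2$.
\item If the roots are real, the larger one is $\tfrac{-p+\sqrt{p^2-4q}}{2}=\tfrac{-2q}{p+\sqrt{p^2-4q}}\le -\tfrac{q}{p}\le -\tfrac{q_{min}}{p_{max}}$.
\end{itemize}
Hence $\sigma_s=\min\{p_{min}/2,\,q_{min}/p_{max}\}>0$ works for all $t$. This uniform compactness argument is the main point needing care, and it is exactly where (A1) enters.

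\textbf{Bounding $\dot{\mathbf{A}}_\eta$.} Differentiating gives
\begin{equation*}
\dot{\mathbf{A}}_\eta=\begin{bmatrix}0&0\\ am\ddot v_x & \tfrac{(a+b)bC_r}{I_z v_x^2}\dot v_x\end{bmatrix}.
\end{equation*}
Using any matrix norm bounded by the sum of absolute entries (e.g.\ Frobenius $\le$ entrywise $\ell_1$), we get
\begin{equation*}
\|\dot{\mathbf{A}}_\eta(t)\|\le c\left(|\dot v_x(t)|+|\ddot v_x(t)|\right),\qquad c:=\max\Big\{am,\ \tfrac{(a+b)bC_r}{I_zv_{x,min}^2}\Big\}.
\end{equation*}
Squaring gives $\|\dot{\mathbf{A}}_\eta\|^2\le 2c^2(\dot v_x^2+\ddot v_x^2)$.

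\textbf{Matching the conditions of Lemma~\ref{lem: LTV_stability}.}
\begin{itemize}
\item Condition (i) maps to Lemma (i) with $\rho'=2c^2\rho$.
\item Condition (ii)(a) maps to Lemma (a) with $\mu'=c\mu$ and $\alpha_0'=c\alpha_0$.
\item Condition (ii)(b) maps to Lemma (b) with $\mu'=\sqrt2 c\mu$ and $\alpha_0'=2c^2\alpha_0$.
\item Condition (ii)(c) maps to Lemma (c) with $\mu'=c\mu$.
\end{itemize}
The lemma supplies a threshold $\mu'^*$. Setting $\mu^*=\mu'^*/c$ (or $\mu'^*/(\sqrt2 c)$ in case (b)), we obtain that $\mu<\mu^*$ implies global exponential stability of $\boldsymbol{\eta}=0$. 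The constants depend on the true parameters, which are fixed and bounded under Assumption~\ref{assump: parameters}, so this rescaling is legitimate.
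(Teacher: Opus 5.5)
Your proposal is correct and follows the paper's proof. You use Routh--Hurwitz under (A1) for the eigenvalue condition and the decomposition $\mathbf{A_\eta}=\mathbf{A_1}+\tfrac{1}{v_x}\mathbf{A_2}+\dot v_x\mathbf{A_3}$ to obtain $\|\dot{\mathbf{A}}_\eta\|\le c(|\dot v_x|+|\ddot v_x|)$ and $\|\dot{\mathbf{A}}_\eta\|^2\le 2c^2(\dot v_x^2+\ddot v_x^2)$, then appeal directly to Lemma~\ref{lem: LTV_stability}. Your case analysis giving the uniform margin $\sigma_s=\min\{p_{min}/2,\,q_{min}/p_{max}\}$ is an improvement, since the paper only shows negative real parts at each $t$ while the lemma needs a $t$-independent bound; also, differentiability of $\dot v_x$ is stated outright in Assumption~\ref{assump: longitudinal}, so you need not infer it from the hypotheses.
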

\begin{proof}
   From (\ref{eq: internal dynamics}), each entry of $\mathbf{A_\eta}(t)$ is differentiable and bounded because $\tfrac{1}{v_x}$ and $\dot v_x$ are differentiable and bounded.
    
     At every $t$, the characteristic equation of $\mathbf{A_\eta}(t)$ is 
    \begin{equation}
        \det(\mathbf{A_\eta}(t)-\lambda\mathbf{I})=\lambda^2+\tfrac{(a+b) bCr }{v_x(t) I_z}\lambda +\tfrac{(a+b)C_r+a m \dot v_x(t)}{I_z}.
    \end{equation}
    By the Routh-Hurwitz criterion, the solutions to the above equation all have negative real parts if $\tfrac{(a+b) bCr }{v_x(t) I_z}>0$ and $\tfrac{(a+b)C_r+a m \dot v_x(t)}{I_z}>0$. Referring to Assumptions \ref{assump: parameters} and \ref{assump: longitudinal}, the two inequalities are satisfied if $(a+b)C_r>-am\dot v_x(t)\geq am|\dot v_{x,min}|$. Therefore, if (A1) is satisfied, all eigenvalues of $\mathbf{A_\eta}(t)$ have negative real parts $\forall t\geq0$.

    Note that from (\ref{eq: internal dynamics}), $\mathbf{A_\eta}(t)$ can be written as $\mathbf{A_\eta}(t)=\mathbf{A_1}+\tfrac{1}{v_x(t)}\mathbf{A_2}+\dot v_x(t)\mathbf{A_3}$, hence
    \begin{align}
        \mathbf{\dot A_\eta}(t)=-\tfrac{\dot v_x(t)}{v_x^2(t)}\mathbf{A_2}+\ddot v_x(t)\mathbf{A_3},
    \end{align}
    where $\mathbf{A_2}=
      {\small\begin{bmatrix}
        0&0\\
         0&-\tfrac{(a+b) bCr }{ I_z}
    \end{bmatrix}},
        \mathbf{A_3}=
        {\small\begin{bmatrix}
        0&0\\
         a m &0
    \end{bmatrix}}.$
    
    For conditions (ii)(a) and (ii)(c), we have
    \begin{align}
        \|\mathbf{\dot A_\eta}(t)\|
        &\leq \tfrac{\left|\dot v_x(t)\right|}{v_{x,min}^2} \|\mathbf{A_2}\|+|\ddot v_x(t)| \|\mathbf{A_3}\| \notag \\
        &\leq \max\left(\tfrac{\|\mathbf{A_2}\|}{v_{x,min}^2},\|\mathbf{A_3}\|\right) \left(\left|\dot v_x(t)\right|+\left|\ddot v_x(t)\right|\right),
    \end{align}
    conditions (ii)(a) and (ii)(c) follow from Lemma \ref{lem: LTV_stability}.

    For conditions (i) and (ii)(b), we utilize the induced 2-norm:
    \begin{align}
        \|\mathbf{\dot A_\eta}(t)\|_2^2&\leq \left(\|-\tfrac{\dot v_x(t)}{v_x^2(t)}\mathbf{A_2}\|_2+\|\ddot v_x(t)\mathbf{A_3}\|_2\right)^2\notag\\
        &\leq2\|-\tfrac{\dot v_x(t)}{v_x^2(t)}\mathbf{A_2}\|_2^2+2\|\ddot v_x(t)\mathbf{A_3}\|_2^2  \notag \\
        &\leq2 \underbrace{\left(\tfrac{(a+b) bCr }{ I_zv_{x,min}^2}\right)^2}_{a_1}\dot v^2_x(t) + 2a^2m^2\ddot v^2_x(t) \notag \\
        &\leq 2\max\left(a_1,a^2m^2\right) \left(\dot v^2_x(t)+\ddot v^2_x(t)\right),
    \end{align}
    using Lemma \ref{lem: LTV_stability}, conditions (i) and (ii)(b) are established.
\end{proof}
\begin{rem}
\label{rem: internal_state_parameter_condition}
    Assumption (A1) in Proposition \ref{prop: internal GES} is trivially satisfied in practical scenarios. For a typical passenger vehicle \cite{liu2020robust}, with parameters $a\approx1.3m$, $b\approx1.6m$, $C_r\approx380000N/rad$, $m\approx1900kg$, a deceleration of $450m/s^2$ would be required for (A1) to be violated.
\end{rem}
    \begin{rem}
Note that any one of the conditions in Proposition \ref{prop: internal GES} is sufficient. Roughly speaking, condition (i) requires the longitudinal acceleration and jerk to remain square-integrable over time, whereas conditions (ii)(a)--(c) require their accumulated size over finite time intervals, or their magnitude, to remain uniformly controlled. These conditions are consistent with the objectives of the longitudinal controller and planner, such as maintaining a desired speed \cite{swaroop1994string} and avoiding unnecessarily aggressive acceleration and jerk \cite{li2024sequencing,werling2012optimal}. They do not exclude rapid braking or acceleration, so long as the resulting signals remain finite in the classical sense.
\end{rem}

Proposition \ref{prop: internal GES} establishes global exponential stability of $\boldsymbol{\eta}$ when $\boldsymbol{\xi}=\dot{\tilde\theta} = 0$. In practice, $\boldsymbol{\xi}$ may not fully converge because of uncertainties and disturbances, and $\dot{\tilde\theta} \neq 0$ unless the path is straight. Hence, only boundedness of $\boldsymbol{\eta}$ can be guaranteed. To formalize this, we invoke the following proposition.

\begin{prop}
\label{prop: internal_ISS}
    If the unforced internal dynamics $\dot{\boldsymbol{\eta}} = f_\eta(t, \boldsymbol{\eta}, 0, 0)$ is globally exponentially stable, then the internal dynamics $\dot{\boldsymbol{\eta}} = f_\eta(t, \boldsymbol{\eta}, \boldsymbol{\xi}, \dot{\theta}_R)$ is input-to-state stable (ISS) with respect to the inputs $(\boldsymbol{\xi}, \dot{\theta}_R)$.
\end{prop}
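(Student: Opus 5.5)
The internal dynamics \eqref{eq: internal dynamics} are linear in $\boldsymbol{\eta}$ and in the inputs $\boldsymbol{u}:=[\boldsymbol{\xi}^T,\dot\theta_R]^T$. The plan is therefore to use the standard converse-Lyapunov route for linear time-varying systems (Khalil, Lemma 4.6 / Theorem 4.14). The alternative is to work directly with the state transition matrix; both routes rest on the same two facts.

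First, global exponential stability of $\dot{\boldsymbol\eta}=\mathbf{A_\eta}(t)\boldsymbol\eta$ gives a bound on the state transition matrix:
\[
\|\Phi(t,t_0)\|\le k e^{-\lambda(t-t_0)}
\]
for some $k,\lambda>0$ that are uniform in $t_0$. Uniformity holds because Lemma~\ref{lem: LTV_stability} delivers uniform exponential stability of the LTV system; I would state this explicitly.

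Second, $\mathbf{B}_\eta(t)$ is uniformly bounded. Its entries involve only $a,b,m,I_z,C_r$, $1/v_x$, $v_x^2/v_x=v_x$ and $\dot v_x/v_x$. By Assumptions~\ref{assump: parameters}--\ref{assump: longitudinal} ($v_x\in[v_{x,min},v_{x,max}]$ with $v_{x,min}>0$, and $\dot v_x$ bounded), we get $\|\mathbf{B}_\eta(t)\|\le \bar b<\infty$ for all $t$.

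With these two facts, variation of constants gives
\[
\boldsymbol\eta(t)=\Phi(t,t_0)\boldsymbol\eta(t_0)+\int_{t_0}^t\Phi(t,s)\mathbf{B}_\eta(s)\boldsymbol u(s)\,ds .
\]
Taking norms,
\[
\|\boldsymbol\eta(t)\|\le k e^{-\lambda(t-t_0)}\|\boldsymbol\eta(t_0)\|+\frac{k\bar b}{\lambda}\sup_{t_0\le s\le t}\|\boldsymbol u(s)\|.
\]
This is exactly the ISS estimate, with class-$\mathcal{KL}$ function $\beta(r,s)=kre^{-\lambda s}$ and class-$\mathcal K$ gain $\gamma(r)=k\bar b r/\lambda$, and it holds globally.

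If a Lyapunov presentation is preferred, I would take
\[
\mathbf P(t)=\int_t^\infty\Phi^T(\tau,t)\Phi(\tau,t)\,d\tau .
\]
Boundedness of $\mathbf{A_\eta}$ gives $c_1\mathbf I\le\mathbf P(t)\le c_2\mathbf I$ and $\dot{\mathbf P}+\mathbf P\mathbf{A_\eta}+\mathbf{A_\eta}^T\mathbf P=-\mathbf I$. Then $V=\boldsymbol\eta^T\mathbf P(t)\boldsymbol\eta$ satisfies
\[
\dot V\le-\|\boldsymbol\eta\|^2+2c_2\bar b\|\boldsymbol\eta\|\|\boldsymbol u\|\le -\tfrac12\|\boldsymbol\eta\|^2
\]
whenever $\|\boldsymbol\eta\|\ge 4c_2\bar b\|\boldsymbol u\|$. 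Khalil's Theorem 4.19 then yields ISS.

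The main obstacle is not the algebra but two hypotheses that need care. One is that the exponential stability from Proposition~\ref{prop: internal GES} is uniform in the initial time. This is what makes the constants $k,\lambda$, and hence $c_1,c_2$, independent of $t_0$; I would argue it from the construction in Lemma~\ref{lem: LTV_stability}, which gives uniform bounds. The other is the uniform bound on $\mathbf{B}_\eta(t)$. That bound requires $v_x$ to stay bounded away from zero, and the standing assumptions supply this.
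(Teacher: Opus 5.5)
Your argument is correct, but it is organized differently from the paper's. The paper does not exploit linearity. It notes that the Jacobian of $f_\eta$ with respect to $(\boldsymbol{\eta},\boldsymbol{\xi},\dot\theta_R)$ is $[\mathbf{A}_\eta(t)\ \mathbf{B}_\eta(t)]$, which is uniformly bounded because $v_x$, $1/v_x$ and $\dot v_x$ are. Hence $f_\eta$ is globally Lipschitz uniformly in $t$, and the paper then cites Lemma 4.6 of \cite{khalil2002nonlinear} (GES of the unforced system plus global Lipschitz implies ISS) as a black box.

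Your variation-of-constants estimate is in effect the linear specialization of what that lemma proves, done by hand. It needs only the uniform exponential bound on $\Phi(t,t_0)$ and a bound on $\mathbf{B}_\eta$; boundedness of $\mathbf{A}_\eta$ is not used. In return it gives explicit comparison functions, $\beta(r,s)=kre^{-\lambda s}$ and a linear gain $\gamma(r)=k\bar b r/\lambda$. That makes the paper's subsequent claim, that $\boldsymbol\eta$ is ultimately bounded by a class-$\mathcal K$ function of $\sup\|(\boldsymbol\xi,\dot\theta_R)\|$, quantitative.

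The paper's route is shorter and would still work if the internal dynamics were nonlinear but globally Lipschitz. Yours is self-contained and sharper. Your Lyapunov alternative, with $\mathbf P(t)=\int_t^\infty\Phi^T\Phi\,d\tau$ and Theorem 4.19, is exactly the proof of Lemma 4.6 in the linear case. There, boundedness of $\mathbf{A}_\eta$ is genuinely needed for the lower bound $c_1$.

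On the uniformity point you flag: you do not need to recover it from the construction in Lemma~\ref{lem: LTV_stability}. Global exponential stability, in the sense of both Khalil and Ioannou--Sun, already comes with constants independent of $t_0$. The paper's appeal to Lemma 4.6 relies on precisely the same uniformity.
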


\begin{proof}
    From (\ref{eq: internal dynamics}), we have
    \begin{align}
        \label{eq: f_eta_norm}
        \left\|\tfrac{\partial f_\eta}{\partial (\boldsymbol{\eta}, \boldsymbol{\xi}, \dot{\theta}_R)}(t)\right\| = \left\|[\mathbf{A}_\eta(t) \quad \mathbf{B}_\eta(t)]\right\|.
    \end{align}
    Since $v_x(t)$, $\tfrac{1}{v_x(t)}$, and $\dot{v}_x(t)$ are bounded, the right-hand side of (\ref{eq: f_eta_norm}) is uniformly bounded for all $t \geq 0$ and $(\boldsymbol{\eta}, \boldsymbol{\xi}, \dot{\theta}_R) \in \mathbb{R}^5$. That is, there exists a constant $L > 0$ such that
    \begin{equation}
        \left\|\tfrac{\partial f_\eta}{\partial (\boldsymbol{\eta}, \boldsymbol{\xi}, \dot{\tilde\theta}_R)}(t)\right \| \leq L, \quad \forall (t, (\boldsymbol{\eta}, \boldsymbol{\xi}, \dot{\tilde\theta}_R)) \in [0, \infty) \times \mathbb{R}^5.
    \end{equation}
    This implies that $f_\eta$ is globally Lipschitz in $(\boldsymbol{\eta}, \boldsymbol{\xi}, \dot{\theta}_R)$, uniformly  $t$. Combined with the global exponential stability of the unforced internal dynamics, Lemma 4.6 in \cite{khalil2002nonlinear} then implies that the internal dynamics $\dot{\boldsymbol{\eta}} = f_\eta(t, \boldsymbol{\eta}, \boldsymbol{\xi}, \dot{\theta}_R)$ is ISS.
\end{proof}

Proposition \ref{prop: internal_ISS} ensures that, under the conditions specified in Proposition \ref{prop: internal GES}, the internal state $\boldsymbol{\eta}(t)$ remains bounded for any bounded input $(\boldsymbol{\xi}(t),\dot{\theta}_R(t))$. Moreover, as $t$ increases, $\boldsymbol{\eta}(t)$ will be ultimately bounded by a class $\mathcal{K}$ function of $\sup_{t\geq 0}\|(\boldsymbol{\xi}(t),\dot{\theta}_R(t))\|$ \cite{khalil2002nonlinear}.

\section{Robust Nonlinear Controllers Design}
\label{sec4}
As discussed in the previous section, when all parameters are known exactly, using the feedback law in (\ref{eq: nonlinearity_cancellation_exact}) converts the external state dynamics to (\ref{eq: linearized_external_exact}), which can be readily stabilized with suitable $k_1$ and $k_2$. In reality, however, the system parameters are uncertain: $m$ and $I_z$ may change with vehicle loading, $C_f$ and $C_r$ depend on road conditions, tire wear, and temperature. We therefore introduce two robustification methods, LR and INDI, to handle these uncertainties.

The proposition of two alternative methods is motivated by their distinct trade-offs in practical implementation. LR requires the nominal values and explicit bounds of all parameters; INDI only assumes parameter boundedness and relies on the nominal values of two parameters, but uses additional measurements. This offers practitioners flexibility based on available parameter information and sensing resources.

Note that the boundedness of the internal states has already been established previously with minimal parameter assumptions (see Remark \ref{rem: internal_state_parameter_condition}), allowing us to concentrate on the robustness of the external dynamics. We characterize the robustness in the sense of ultimate boundedness \cite{khalil2002nonlinear}.

\subsection{Lyapunov Redesign}
The LR technique adjusts a pre-existing stabilizing control law of a nominal model by incorporating a correction term based on the Lyapunov function's derivative and a bound on the uncertainty. This adjustment keeps the Lyapunov function decreasing despite the presence of uncertainties.



Let $\hat f_1, \hat g_1,$ and $\hat d_1$ represent the nominal values of $f_1, g_1,$ and $d_1$, respectively, implying these are the values of $f_1, g_1,$ and $d_1$ when each parameter $z \in \{C_f, C_r, m, I_z\}$ is substituted by its nominal value $\hat z$ (refer to Assumption \ref{assump: parameters}).

For LR, Instead of the feedback law in (\ref{eq: nonlinearity_cancellation_exact}), we employ
\begin{equation}
\label{eq: LR_full_feedback}
    \delta_f=\tfrac{1}{\hat g_1}\left(-\hat f_1-\hat d_1\dot{\theta}_R-\mathbf{K}\boldsymbol{\xi}+u_r\right).
\end{equation}
 Substituting the above equation into (\ref{eq: output_derivative_1})-(\ref{eq: output_derivative_2}) and rearranging: 
\begin{equation}
\label{eq: external_LR}
\boldsymbol{\dot\xi}=\underbrace{{\small\begin{bmatrix}
        0 & 1\\
        -k_1 & -k_2
    \end{bmatrix}}}_{\mathbf{\hat A}}\boldsymbol{\xi}+\underbrace{{\small\begin{bmatrix}
        0\\1
    \end{bmatrix}}}_{\mathbf{\hat B}}\left(u_r+d_{lr}(\cdot)\right).
\end{equation}
Here, 
\begin{align}
\label{eq: perturbterm}
d_{lr}(\cdot)=&\left(\tfrac{g_1}{\hat g_1}-1\right)\left(-k_1e_{lat}-k_2\dot e_{lat}+u_r\right)+f_1-\tfrac{g\hat f_1}{\hat g_1} \notag\\
&+\left(d_1-\tfrac{g\hat d_1}{\hat g_1}\right)\dot\theta_R
\end{align}
is regarded as a perturbation term, $k_1$ and $k_2$ are selected such that $\mathbf{\hat A}$ is Hurwitz, $u_r$ is the correction term to counteract $d_{lr}$.

Since $\mathbf{\hat{A}}$ is Hurwitz, there exists a positive definite symmetric matrix $\mathbf{P}$ such that 
\begin{equation}
\label{eq: LR_Lyapunov_equation}
\mathbf{P}\mathbf{\hat A}+\mathbf{\hat A}^T\mathbf{P}=-\mathbf{I}.
\end{equation}
\begin{equation}
\label{eq: perturb_bound}
\text{Suppose  }\left|d_{lr}(\cdot)\right|\leq\rho\left(\mathbf{x},v_x,\dot v_x, \dot{\theta}_R\right)+\kappa_0|u_r|,\qquad 
\end{equation}
with $\rho(\cdot)\geq0$ being a continuous function and $0\leq\kappa_0<1$. 
\begin{rem}
    In (\ref{eq: perturb_bound}), the specific expression of $\rho(\cdot)$ and $\kappa_0$ can be derived by expanding $f_1$, $g_1$, $d_1$, $\hat f_1$, $\hat g_1$, and $\hat d_1$ in (\ref{eq: perturbterm}) using (\ref{eq: e_ddot_govern}), yielding
    \begin{align}
        \label{eq: purterbation bound}d_{lr}(\cdot)=&\alpha_d\cdot\left(-k_1e_{lat}-k_2\dot e_{lat}+u_r-\dot v_x\tilde\theta+v_x\dot\theta_R\right) \notag \\
        &+\gamma_d\cdot\left(\tilde\theta+\tfrac{b\left( \dot{\tilde\theta}+\dot\theta_R\right)-\dot e_{lat}}{v_x}\right), \\
        \left|d_{lr}(\cdot)\right|\leq&\max\left(|\alpha_d|\right)|u_r| +\max\left(|\gamma_d|\right)\left|\tilde\theta+\tfrac{b\left( \dot{\tilde\theta}+\dot\theta_R\right)-\dot e_{lat}}{v_x}\right|  \notag \\
        &+\max\left(|\alpha_d|\right) \left|-k_1e_{lat}-k_2\dot e_{lat}-\dot v_x\tilde\theta+v_x\dot\theta_R\right|,
    \end{align}
    where $\alpha_d=\tfrac{C_f\hat m}{\hat C_f m}-1$ and $\gamma_d=\tfrac{1}{m}\left(C_r-C_f\tfrac{\hat C_r}{\hat C_f}\right)$, their maximum absolute value can be calculated once the bounds and nominal value of $C_f, C_r$ and $m$ are specified. In the above inequality, the first term on the right side gives the expression of $\kappa_0|u_r|$, and the last two terms give the expression of $\rho(\cdot)$.
\end{rem}

A naive choice for the correction term is \cite{khalil2002nonlinear,estrada2024passive}
\begin{equation}
\label{eq: LR_discontinuous}
u_r=-\beta\left(\mathbf{x},v_x,\dot v_x, \dot{\theta}_R\right) \text{sgn}(w),
\end{equation}
where 
\begin{equation}
\beta\left(\mathbf{x},v_x,\dot v_x, \dot{\theta}_R\right)\geq\tfrac{\rho\left(\mathbf{x},v_x,\dot v_x, \dot{\theta}_R\right)}{1-\kappa_0},\quad
\label{eq: LR_w}
w=2\mathbf{\hat B}^T\mathbf{P}\boldsymbol{\xi}.
\end{equation}
\begin{rem}
Discontinuous control such as (\ref{eq: LR_discontinuous}) is known to cause chattering in practice \cite{khalil2002nonlinear}, leading to significant actuator wear. A well-known continuous LR control law was introduced in \cite{corless1981continuous} (see also \cite{khalil2002nonlinear}), but it employs a $\beta^2(\cdot)$ gain that may result in overly aggressive control. To address these issues, we propose a continuous LR controller that replaces the discontinuous signum function with a high-slope saturation function. Moreover, inspired by sliding mode control literature, we add an equivalent control term to drive $w$ towards zero.
\end{rem}
We propose the following correction term
\begin{equation}
\label{eq: correction_term}
    u_r=-{\hat\beta(\mathbf{x},v_x,\dot v_x, \dot{\theta}_R)}\text{sat}\left(\tfrac{w}{\epsilon}\right)+u_{eq},
\end{equation}
where the equivalent control term is given by
\begin{equation}
    \label{LR_equivalent_control}
    u_{eq}=-(\mathbf{\hat{B}}^T\mathbf{{P\hat{B}}})^{-1}\mathbf{\hat{B}}^T\mathbf{PA}\boldsymbol{\xi},
\end{equation}
$\hat\beta(\cdot)$ is chosen such that
\begin{equation}
\hat\beta\left(\mathbf{x},v_x,\dot v_x, \dot{\theta}_R\right)\geq\tfrac{\rho\left(\mathbf{x},v_x,\dot v_x, \dot{\theta}_R\right)+\kappa_0|u_{eq}|+\rho_0}{1-\kappa_0},
\end{equation}
$\epsilon$ and $\rho_0$ are small positive constants, and the saturation function is defined by
\begin{equation}
    \text{sat}\left(\tfrac{w}{\epsilon}\right)=\begin{cases}
\tfrac{w}{\epsilon}, & \text{if } |w| 
\leq\epsilon \\
\mathrm{sgn}(w), & \text{if } |w| 
>\epsilon
\end{cases}.
\end{equation}

The following Proposition characterizes the robustness of the LR controller in terms of ultimate boundedness.
\begin{prop}
\label{prop: LR_ultimate}
    With the control law in (\ref{eq: LR_full_feedback}) and the correction term in (\ref{eq: correction_term}), the state $\boldsymbol{\xi}(t)$ of the external dynamics in (\ref{eq: external_LR}) is ultimately bounded by a class $\mathcal{K}$ function of $\epsilon$ for any bounded $\boldsymbol{\xi}(0)$.
\end{prop}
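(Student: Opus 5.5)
The plan is to treat $w=2\mathbf{\hat B}^T\mathbf{P}\boldsymbol{\xi}$ as a sliding variable, rather than work directly with $V=\boldsymbol{\xi}^T\mathbf{P}\boldsymbol{\xi}$. The cross term $w\,u_{eq}$ in $\dot V$ is sign-indefinite and is not obviously dominated by $\hat\beta$, so the argument proceeds in two stages. First, show that $w$ enters the boundary layer $|w|\le\epsilon$ in finite time and stays there. Second, show that on this layer the reduced dynamics of $\boldsymbol{\xi}$ are exponentially stable with input $w$. This yields an ultimate bound proportional to $\epsilon$. Throughout, I read $\mathbf{A}$ in \eqref{LR_equivalent_control} as $\mathbf{\hat A}$, and I write $\mathbf{P}=[p_{ij}]$.

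\textbf{Step 1 (reaching phase).} From \eqref{eq: external_LR}, $\dot w=2\mathbf{\hat B}^T\mathbf{P}\mathbf{\hat A}\boldsymbol{\xi}+2p_{22}(u_r+d_{lr})$. By construction $u_{eq}=-p_{22}^{-1}\mathbf{\hat B}^T\mathbf{P}\mathbf{\hat A}\boldsymbol{\xi}$, so the first term cancels exactly and
\[
\dot w=2p_{22}\left(-\hat\beta\,\mathrm{sat}(w/\epsilon)+d_{lr}\right).
\]
Take $V_w=\tfrac12 w^2$. Using \eqref{eq: perturb_bound} together with $|u_r|\le\hat\beta+|u_{eq}|$, on the region $|w|>\epsilon$ we get
\[
w\dot w\le 2p_{22}|w|\bigl(-(1-\kappa_0)\hat\beta+\rho+\kappa_0|u_{eq}|\bigr)\le-2p_{22}\rho_0|w|.
\]
The last inequality is exactly where the choice of $\hat\beta$, including the $\kappa_0|u_{eq}|$ and $\rho_0$ terms, is used. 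Since $p_{22}>0$, $|w|$ decreases at rate at least $2p_{22}\rho_0$, so $|w|\le\epsilon$ is reached in finite time, bounded by $(|w(0)|-\epsilon)/(2p_{22}\rho_0)$. The set $\{|w|\le\epsilon\}$ is positively invariant because $\dot V_w<0$ on its boundary. I also need finite escape to be excluded during the reaching phase. This follows because the closed loop is locally Lipschitz with continuous $\rho$, and the Step 2 relation bounds $\xi_1$ in terms of $w$ at all times.

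\textbf{Step 2 (boundary-layer dynamics).} Solve \eqref{eq: LR_Lyapunov_equation} explicitly for the companion matrix $\mathbf{\hat A}$. This gives $p_{12}=\tfrac{1}{2k_1}>0$ and $p_{22}=\tfrac{1+1/k_1}{2k_2}>0$. Then $w=2(p_{12}\xi_1+p_{22}\xi_2)$, and since $\dot\xi_1=\xi_2$,
\[
\dot\xi_1=-\tfrac{p_{12}}{p_{22}}\xi_1+\tfrac{1}{2p_{22}}w.
\]
This is a scalar exponentially stable system driven by $w$. Once $|w|\le\epsilon$, $\xi_1$ is ultimately bounded by $\tfrac{\epsilon}{2p_{12}}(1+\vartheta)=k_1\epsilon(1+\vartheta)$ for any $\vartheta>0$. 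Then $\xi_2=\tfrac{w}{2p_{22}}-\tfrac{p_{12}}{p_{22}}\xi_1$ is ultimately bounded by a linear function of $\epsilon$. Hence $\|\boldsymbol{\xi}\|$ is ultimately bounded by $c\,\epsilon$, which is class $\mathcal{K}$ in $\epsilon$, with $c$ depending only on $k_1$ and $k_2$.

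\textbf{Main obstacle.} The delicate point is Step 1: it must be valid even though $d_{lr}$ depends on $u_r$ itself and on the full state $\mathbf{x}$, including the internal states. Two things handle this. The bound \eqref{eq: perturb_bound} with $\kappa_0<1$ resolves the algebraic loop in $u_r$. Proposition~\ref{prop: internal_ISS} keeps $\boldsymbol{\eta}$ bounded for bounded $(\boldsymbol{\xi},\dot\theta_R)$, so $\rho$ stays finite along trajectories. I would also double-check the sign of $p_{12}$ as computed above, since the whole reduced-order argument relies on $p_{12}/p_{22}>0$.
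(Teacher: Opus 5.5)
Your proposal is correct and follows essentially the same route as the paper. The paper also first uses $V_w=\tfrac12 w^2$, with $u_{eq}$ cancelling the $2\mathbf{\hat B}^T\mathbf{P}\mathbf{\hat A}\boldsymbol{\xi}$ term, to get $\dot V_w\le -2p_{22}\rho_0|w|$ for $|w|>\epsilon$, and then uses the scalar relation $\dot e_{lat}=-\tfrac{p_{12}}{p_{22}}e_{lat}+\tfrac{w}{2p_{22}}$ with $p_{12}=\tfrac{1}{2k_1}>0$ to obtain the same linear-in-$\epsilon$ bound (your $1+\vartheta$ is the paper's $1/(1-\phi)$). Your remarks on excluding finite escape during the reaching phase, via boundedness of $w$ and ISS of the internal dynamics, make explicit a point the paper handles only loosely.
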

\begin{proof}
From (\ref{eq: external_LR}) and (\ref{eq: LR_w}), we have $\dot w=2\mathbf{\hat B}^T\mathbf{P}\mathbf{\hat A}\boldsymbol{\xi}+2\mathbf{\hat B}^T\mathbf{P}\mathbf{\hat{B}}\left(u_r+d_{lr}(\cdot)\right)$. Consider the Lyapunov function candidate $V_w=\tfrac{1}{2}w^2$, its derivative satisfies
\begin{align}
    \dot V_w=&w\left(2\mathbf{\hat B}^T\mathbf{P}\mathbf{\hat A}\boldsymbol{\xi}+2\mathbf{\hat B}^T\mathbf{P}\mathbf{\hat{B}}\left(u_r+d_{lr}(\cdot)\right)\right) \notag \\
    =& 2w\mathbf{\hat B}^T\mathbf{P}\mathbf{\hat{B}}\Big(-\hat\beta(\cdot)\text{sat}\left(\tfrac{w}{\epsilon}\right)+d_{lr}(\cdot)\Big).
\end{align}
For $|w|>\epsilon$, noting that $\mathbf{\hat B}^T\mathbf{P}\mathbf{\hat{B}}$ is a positive scalar, we have
\begin{align}
    \dot V_w&\leq2\mathbf{\hat B}^T\mathbf{P}\mathbf{\hat{B}}\Big(-|w|\hat\beta+|w|\left(\rho+\kappa_0\left|-\hat\beta\text{sgn}(w)+u_{eq}\right|\right) \Big)
    \notag \\
    &\leq2\mathbf{\hat B}^T\mathbf{P}\mathbf{\hat{B}}\Big(-|w|\hat\beta+|w|\left(\rho+\kappa_0\hat\beta+\kappa_0|u_{eq}|\right) \Big)
    \notag \\
    &= 2\mathbf{\hat B}^T\mathbf{P}\mathbf{\hat{B}}|w|\left(-(1-\kappa_0)\hat\beta+\rho+\kappa_0|u_{eq}|\right) \notag \\
    &\leq-2\mathbf{\hat B}^T\mathbf{P}\mathbf{\hat{B}}\rho_0|w|,
\end{align}
which indicates, by Theorem 4.18 in \cite{khalil2002nonlinear}, that $w$ enters the set $\{|w|\leq\epsilon\}$ in finite time and remains inside thereafter.

Let $p_{ij}$ denote the $(i,j)$-entry of $\mathbf{P}$, from ($\ref{eq: LR_w}$), we have 
\begin{equation}
\label{eq: e_dote_w}
w=2p_{12}e_{lat}+2p_{22}\dot e_{lat}.
\end{equation}
Since $\mathbf{P}$ is positive definite, $p_{22}>0$ by Sylvester’s criterion. Moreover, the $(1,1)$-entry of both sides in (\ref{eq: LR_Lyapunov_equation}) yields the scalar equation $-2k_1 p_{12}=-1$. Since $\mathbf{\hat A}$ is Hurwitz, $k_1>0$. Hence, $p_{12}=\tfrac{1}{2k_1}>0$. Consider the Lyapunov function candidate $V_e=\tfrac{1}{2}e^2_{lat}$, we have
\begin{align}
\label{eq: LR_e_lyap}
    \dot V_e&=e_{lat}(-\tfrac{p_{12}}{p_{22}}e_{lat}+\tfrac{w}{2p_{22}}) \notag \\
    &\leq -\tfrac{p_{12}}{p_{22}}e^2_{lat}+\tfrac{|w||e_{lat}|}{2p_{22}}  \notag \\
    & \leq -\tfrac{\phi p_{12}}{p_{22}}e^2_{lat}, ~~~~\forall |e_{lat}|\geq \tfrac{|w|}{2(1-\phi)p_{12}},
\end{align}
for any constant $\phi\in(0,1)$.

We first show the boundedness of $e_{lat}$ and $\dot e_{lat}$. Since \(w\) enters the set \(\{|w| \leq \epsilon\}\) in finite time and remains there thereafter, and \(w(t)\) is continuous, there exists a constant \(c>0\) such that \(|w(t)|\le c\) for all \(t\ge 0\). By (\ref{eq: LR_e_lyap}) and \(|w(t)|\le c\), we have
$\dot{V}_e \leq -\tfrac{\phi p_{12}}{p_{22}} e_{lat}^2,~ \forall|e_{lat}| > \tfrac{c}{2(1 - \phi)p_{12}}$. Hence, whenever
$|e_{lat}| > \tfrac{c}{2(1 - \phi)p_{12}}$,
the Lyapunov function \(V_e\) is nonincreasing. 
Therefore, the set 
$\{ V_e \leq \tfrac{1}{2} \left( \tfrac{c}{2(1 - \phi)p_{12}} \right)^2 \}$ 
is forward invariant. Since $V_e = \tfrac{1}{2} e_{lat}^2$, this is equivalent to the set 
$\{ |e_{lat}| \leq \tfrac{c}{2(1 - \phi)p_{12}} \}$,
which is thus also forward invariant. Choosing \(c\) sufficiently large so that
$|e_{lat}(0)| \le \tfrac{c}{2(1-\phi)p_{12}}$,
it follows from forward invariance that \(e_{lat}(t)\) remains bounded for all \(t\ge 0\). Moreover, from (\ref{eq: e_dote_w}), $\dot e_{lat}=\frac{w-2p_{12}e_{lat}}{2p_{22}}$ .
Since \(w(t)\) and \(e_{lat}(t)\) are bounded and \(p_{22}>0\), it follows that \(\dot e_{lat}(t)\) is also bounded for all \(t\ge 0\).

Next, we show the ultimate boundedness of $\boldsymbol{\xi}$. After $w$ enters the set $\{|w|\leq\epsilon\}$, \eqref{eq: LR_e_lyap} yields 
\begin{equation}
\dot V_e\leq -\tfrac{\phi p_{12}}{p_{22}}e^2_{lat}, ~\forall |e_{lat}|\geq \tfrac{\epsilon}{2(1-\phi)p_{12}}.
\end{equation}
Since \(w\) enters \(\{|w|\le \epsilon\}\) in finite time and remains there, the above inequality implies that \(e_{lat}\) enters
\(\left\{|e_{lat}|\le \tfrac{\epsilon}{2(1-\phi)p_{12}}\right\}
\)
in finite time and remains there. Hence, all trajectories reach
\(
\Omega=\left\{|e_{lat}|\le\tfrac{\epsilon}{2(1-\phi)p_{12}},\; |w|\le\epsilon\right\}
\)
in finite time and remain inside thereafter.

Inside $\Omega$, by (\ref{eq: e_dote_w}), $
|\dot e_{lat}|=\left|-\tfrac{p_{12}}{p_{22}}e_{lat}+\tfrac{w}{2p_{22}}\right|\leq\tfrac{(2-\phi)\epsilon}{2(1-\phi)p_{22}}.$ Consequently, all trajectories enter the ultimate bound, 
\begin{equation}
    \|\boldsymbol{\xi}\|_2=\sqrt{e^2_{lat}+\dot e^2_{lat}}\leq \tfrac{\sqrt{p_{22}^2+p^2_{12}(2-\phi)^2}}{2(1-\phi)p_{12}p_{22}}\epsilon,
\end{equation}
in finite time. Since the ultimate bound is linear in \(\epsilon\) with positive coefficient, it is a class \(\mathcal K\) function of \(\epsilon\).
\end{proof}
\begin{rem}
\label{rem: LR_epsilon}
    Proposition \ref{prop: LR_ultimate} offers a clear tuning guideline: choosing smaller $\epsilon$ values enhances robustness.
\end{rem}

\subsection{Incremental Nonlinear Dynamic Inversion}
INDI derives its name from nonlinear dynamic inversion, an alternative term for feedback linearization commonly used in the aerospace community. The INDI method replaces part of the model-based dynamics with measured signals, thereby reducing reliance on precise system modeling. Its effectiveness and robustness have been evaluated on unmanned aerial vehicles via both simulations and experiments \cite{wang2019stability, tal2020accurate, sun2020incremental,sun2022comparative}.

The INDI controller updates the control signal incrementally. From (\ref{eq: output_derivative_1})-(\ref{eq: output_derivative_2}), if we can achieve $\ddot e_{lat}(t)=-\mathbf{K}\boldsymbol{\xi}(t)$, the resulting external dynamics 
\begin{equation}
    \boldsymbol{\dot\xi}(t)={\small\begin{bmatrix}
0 & 1 \\
-k_1 & -k_2
\end{bmatrix}}\boldsymbol{\xi}(t)
\end{equation}
can be easily stabilized. Thus, we treat $-\mathbf{K}\boldsymbol{\xi}(t)$ as the desired value of $\ddot e_{lat}(t)$. Let $\mathbf{y}=[\dot e_{lat}, \tilde\theta,\dot{\tilde\theta},\dot\theta_R,v_x,\dot v_x]^T$ and $\tau$ be a small positive constant. For all $t \geq 0$, we perform the Taylor series expansion of (\ref{eq: e_ddot_govern}) around the point $\left(\mathbf{y}(t-\tau),\delta(t-\tau)\right)$:
\begin{align}
\label{ddot_e_discrete}
    \ddot e_{lat}(t) = &\ddot e_{lat}(t-\tau) +g_1\Delta\delta_f(t) \notag \\
    &+\underbrace{\left.\tfrac{\partial(f_1+d_1\dot\theta_R)}{\partial\mathbf{y}}\right|_{\mathbf{y}=\mathbf{y}(t-\tau)}\Delta\mathbf{y}(t)+R_1(t)}_{\sigma(t)}, 
\end{align}
where $\Delta\delta_f(t)=\delta_f(t)-\delta_f(t-\tau)$, $\Delta\mathbf{y}(t)=\mathbf{y}(t)-\mathbf{y}(t-\tau)$,
\begin{align}
\label{remainder}
    R_1(t) = \tfrac{1}{2}\Delta\mathbf{y}^T(t)\left.\tfrac{\partial^2(f_1+d_1\dot\theta_R)}{\partial\mathbf{y}^2}\right|_{\mathbf{y}=\mathbf{y^*}}\Delta\mathbf{y}(t)
\end{align} 
denotes the expansion remainder wherein $\mathbf{y^*}$ belongs to the line segment joining $\mathbf{y}(t-\tau)$ and $\mathbf{y}(t)$. Note that $\ddot e_{lat}$ is linear in $\delta_f$ so $R_1$ does not contain terms dependent on $\delta_f$. 

In the INDI method, the expression of $\ddot e_{lat}(t)$ are partly replaced with the measurement $\ddot e_{lat}(t-\tau)$, and the control input is updated incrementally using the measurement $\delta_f(t-\tau)$. Recall that the desired value of $\ddot e_{lat}(t)$ is $-\mathbf{K}\boldsymbol{\xi}(t)$, we set the incremental control term as
\begin{equation}
\label{eq: increm_law}
    \Delta\delta_f(t) =\tfrac{1}{\hat g_1}\left(-\mathbf{K}\boldsymbol{\xi}(t)-\ddot e_{lat}(t-\tau)\right),
\end{equation}
and the full INDI control law
\begin{equation}
\label{eq: indi_full_control}
    \delta_f(t) = \delta_f(t-\tau)+\Delta\delta_f(t).
\end{equation}
Substituting (\ref{eq: increm_law}) into (\ref{ddot_e_discrete}) yields
\begin{equation}
\label{eq: 62}
    \ddot e_{lat}(t)=-\mathbf{K}\boldsymbol{\xi}(t)+\underbrace{\left(1-\tfrac{g_1}{\hat g_1}\right)\left(\mathbf{K}\boldsymbol{\xi}(t)+\ddot e_{lat}(t-\tau)\right)+\sigma(t)}_{d_{in}(t)}.
\end{equation}
From \eqref{eq: 62}, we have $\mathbf{K}\boldsymbol{\xi}(t)=d_{in}(t)-\ddot e_{lat}(t)$, substituting it for $\mathbf{K}\boldsymbol{\xi}(t)$ in the expression of $d_{in}(t)$ in \eqref{eq: 62} yields
\begin{align}
\label{indi_d_in}
    d_{in}(t)=\left(1-\tfrac{g_1}{\hat g_1}\right)&\bigl(d_{in}(t)-\ddot e_{lat}(t)+\ddot e_{lat}(t-\tau)\bigr)+\sigma(t), \notag\\
\implies\tfrac{g_1}{\hat g_1}d_{in}(t)
&=\left(1-\tfrac{\hat g_1}{g_1}\right)\Delta \ddot e_{lat}(t)+\tfrac{\hat g_1}{g_1}\sigma(t),
\end{align}
where, $\Delta \ddot e_{lat}(t)=\ddot e_{lat}(t)-\ddot e_{lat}(t-\tau)$. Finally, substituting $\ddot e_{lat}(t)=-\mathbf{K}\boldsymbol{\xi}(t)+d_{in}(t)$ into (\ref{eq: output_derivative_1})-(\ref{eq: output_derivative_2}) gives
\begin{equation}
\label{eq: external_dynamics_INDI}
    \boldsymbol{\dot\xi}(t)=\underbrace{{\small\begin{bmatrix}
0 & 1 \\
-k_1 & -k_2
\end{bmatrix}}}_{\mathbf{A_{in}}}\mathbf{\xi}(t)+\underbrace{{\small\begin{bmatrix}
    0\\1
\end{bmatrix}}}_{\mathbf{B_{in}}}d_{in}(t).
\end{equation}
\begin{rem}
From \eqref{eq: increm_law}--\eqref{eq: indi_full_control}, it follows that INDI does not require knowledge of parameter bounds and relies only on the nominal value of \(g_1\). This makes the controller semi-model-free and alleviates parameter identification efforts. Compared to LR, INDI uses additional onboard measurements \cite{wang2019stability}. In particular, the steering angle \(\delta_f\), typically accessible via the vehicle CAN bus, is needed. Moreover, computing \(\ddot e_{lat}\) requires \(\dot v_y\) (see \eqref{eq: expression_elat_ddot}), which can be obtained from the IMU. Such measurements are commonly available on AV platforms.
\end{rem}

\begin{rem}
    In existing literature, the analysis of the INDI control law primarily relies on either simplification through Jacobian linearization \cite{tal2020accurate}, or assuming the perturbation term $d_{in}$ is bounded over the entire state space \cite{sun2020incremental,wang2019stability}. In contrast, we do not invoke such simplifications or assumptions. Instead, we explicitly account for the structure of $d_{in}$ and establish a semi-global ultimate boundedness result.
\end{rem}
To analyze the ultimate boundedness of the external dynamics, we need to first establish the uniform boundedness of the perturbation term $d_{in}$ in the following lemma. Note that we define a closed (Euclidean) ball as $
\overline{\mathbb{B}}_r := \left\{ \mathbf{z} \in \mathbb{R}^n \;\middle|\; \|\mathbf{z}\|_2 \leq r \right\}$.
\begin{lem}
\label{lem: bounded_din}
     For the external dynamics in (\ref{eq: external_dynamics_INDI}), if: (i) $\boldsymbol{\xi}(t) \in \overline{\mathbb{B}}_r$ for some finite $r > 0$, for all $t \geq -\tau$; (ii) $\ddot e_{lat}(t)$ and $\delta_f(t)$ are continuous on $t\in[-\tau,0]$, then for any $\overline{d}_{\mathrm{in}} > 0$, there exists $\tau^* > 0$ such that for all $t \geq 0$, if $0 < \tau < \tau^*$, we have $|d_{in}(t)| < \overline{d}_{in}$.

\end{lem}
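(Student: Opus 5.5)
The plan is to start from the identity \eqref{indi_d_in},
\begin{equation*}
d_{in}(t)=\left(\tfrac{\hat g_1}{g_1}-\tfrac{\hat g_1^2}{g_1^2}\right)\Delta\ddot e_{lat}(t)+\tfrac{\hat g_1^2}{g_1^2}\sigma(t),
\end{equation*}
and to show that both $\Delta\ddot e_{lat}(t)$ and $\sigma(t)$ can be made arbitrarily small, uniformly in $t\ge 0$, by shrinking $\tau$. The coefficients $\hat g_1/g_1=\hat C_f m/(C_f\hat m)$ are bounded above and below by positive constants by Assumption~\ref{assump: parameters}. So it suffices to find $\tau^*$ such that $|\Delta\ddot e_{lat}(t)|$ and $|\sigma(t)|$ each lie below $\overline d_{in}/(2c)$ for a suitable constant $c$.

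First I will bound $\sigma(t)$. By (\ref{ddot_e_discrete})--(\ref{remainder}), $\sigma(t)$ is the first-order Taylor term plus the remainder of the map $\mathbf y\mapsto f_1+d_1\dot\theta_R$. This map is smooth on the compact set $\mathcal Y$ obtained from $\boldsymbol\xi\in\overline{\mathbb B}_r$, from the ISS bound on $\boldsymbol\eta$ (Proposition~\ref{prop: internal_ISS}, which bounds $\tilde\theta,\dot{\tilde\theta}$ given bounded $\boldsymbol\xi,\dot\theta_R$), and from the bounds on $v_x$ (with $v_x\ge v_{x,min}>0$), $\dot v_x$ and $\dot\theta_R$ in Assumption~\ref{assump: longitudinal}. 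Its first and second partials are therefore bounded on the convex hull of $\mathcal Y$ by constants $L_1$ and $L_2$, giving $|\sigma(t)|\le L_1\|\Delta\mathbf y(t)\|+\tfrac12 L_2\|\Delta\mathbf y(t)\|^2$. Next I need $\|\Delta\mathbf y(t)\|\le \tau\sup\|\dot{\mathbf y}\|$. Each component of $\dot{\mathbf y}$ is controlled as follows:
\begin{itemize}
\item $\ddot e_{lat}$, $\ddot\theta_R=\tfrac{\dot v_x}{v_x}\dot\theta_R$ and $\dot v_x$ are bounded by the assumptions and by $\ddot e_{lat}=-\mathbf K\boldsymbol\xi+d_{in}$;
\item $\ddot{\tilde\theta}$ is bounded through \eqref{eq: theta_ddot_govern};
\item $\ddot v_x$ needs to be bounded, for instance via condition (ii)(c) of Proposition~\ref{prop: internal GES}, which I will invoke.
\end{itemize}
This makes $\sigma=O(\tau)$, but with a circularity: the bounds on $\ddot e_{lat}$ and $\delta_f$ themselves involve $d_{in}$.

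The main obstacle is this circularity, since $\Delta\ddot e_{lat}$ and $\ddot{\tilde\theta}$ depend on $\delta_f$, which is defined recursively through the INDI law. I will handle it with a continuation or induction argument over the intervals $[k\tau,(k+1)\tau]$.
\begin{enumerate}
\item Fix a target bound $D:=\overline d_{in}$. Suppose $|d_{in}|<D$ on $[0,t_1)$. Then $|\ddot e_{lat}|\le \|\mathbf K\|r+D$ there.
\item Because $g_1\neq0$, equation \eqref{eq: e_ddot_govern} gives $\delta_f=(\ddot e_{lat}-f_1-d_1\dot\theta_R)/g_1$, which is bounded on the compact set. Hence $\ddot{\tilde\theta}$ is bounded and $\dot{\mathbf y}$ is bounded by a constant $M(D)$ that does not depend on $\tau$.
\item Substituting the law \eqref{eq: increm_law} into \eqref{eq: e_ddot_govern} shows that $\Delta\ddot e_{lat}(t)=-\mathbf K\Delta\boldsymbol\xi(t)+\Delta d_{in}(t)$ with $|\Delta\boldsymbol\xi|\le\tau\|\dot{\boldsymbol\xi}\|$. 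To avoid a bound on $\Delta d_{in}$, I will use instead the direct identity $\Delta\ddot e_{lat}(t)=g_1\Delta\delta_f(t)+\sigma(t)$ together with \eqref{eq: increm_law}.
\item That identity gives $\tfrac{g_1}{\hat g_1}d_{in}$ as a combination that is contractive when $|1-g_1/\hat g_1|<1$. If this ratio condition is not available, I will fall back to the relation $d_{in}(t)=\ddot e_{lat}(t)+\mathbf K\boldsymbol\xi(t)$ and estimate it by continuity.
\end{enumerate}
With all bounds independent of $\tau$, I choose $\tau^*$ so that $c\,(L_1M\tau+\tfrac12L_2M^2\tau^2+\dots)<D$ for $\tau<\tau^*$. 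The strict inequality then cannot first fail at $t_1$, by continuity of $d_{in}$ on each interval; the initial interval is covered by hypothesis (ii), namely continuity of $\ddot e_{lat}$ and $\delta_f$ on $[-\tau,0]$. This closes the induction and yields $|d_{in}(t)|<\overline d_{in}$ for all $t\ge0$.
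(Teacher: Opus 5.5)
Your bound $|\sigma(t)|=O(\tau)$ is fine. This is granted the bounded $\ddot v_x$ you import from condition (ii)(c) of Proposition~\ref{prop: internal GES}; some uniform continuity of $\dot v_x$ is genuinely needed for a bound uniform in $t$. The problem is your plan to show that $\Delta\ddot e_{lat}$ is also $O(\tau)$: it is circular, and your final choice $c(L_1M\tau+\tfrac12L_2M^2\tau^2+\dots)<D$ leaves that term uncontrolled. Combine $\Delta\ddot e_{lat}=g_1\Delta\delta_f+\sigma$ with \eqref{eq: increm_law}, and set $d_{in}(s):=\ddot e_{lat}(s)+\mathbf K\boldsymbol\xi(s)$, also on the history. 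This gives exactly
\begin{equation*}
d_{in}(t)=\Bigl(1-\tfrac{g_1}{\hat g_1}\Bigr)d_{in}(t-\tau)+\Bigl(1-\tfrac{g_1}{\hat g_1}\Bigr)\mathbf K\Delta\boldsymbol\xi(t)+\sigma(t).
\end{equation*}
Hence $\Delta\ddot e_{lat}(t)=-\tfrac{g_1}{\hat g_1}d_{in}(t-\tau)+O(\tau)$ is of the same size as $d_{in}$ itself.

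The induction therefore closes only through $\kappa D+C(D)\tau<D$ with $\kappa:=|1-g_1/\hat g_1|<1$. The contraction condition is what does the work, and your ``fallback by continuity'' cannot replace it. For $\kappa\ge1$ the recursion carries the mismatch forward undamped or amplified over $t/\tau$ steps. Continuity of $\ddot e_{lat}$ for a fixed $\tau$ gives nothing uniform in $t$, especially since the closed-loop trajectory itself changes with $\tau$. Assumption~\ref{assump: parameters} does not imply $\kappa<1$, so it must be added as a hypothesis. For the ranges in Table~\ref{table: parameters}, $\kappa=|\alpha_d|\le 0.846$, so it holds there. Incidentally, your opening identity carries a spurious factor $\hat g_1/g_1$ copied from a slip in \eqref{indi_d_in}. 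The correct form is the one in \eqref{eq: d_in_redefined}, $d_{in}=(1-\hat g_1/g_1)\Delta\ddot e_{lat}+(\hat g_1/g_1)\sigma$; this is harmless here.

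The second gap is the base case. Hypothesis (ii) gives continuity of the history, not smallness of $\ddot e_{lat}+\mathbf K\boldsymbol\xi$ on it. For $t\in[0,\tau)$ one has $d_{in}(t)=(1-\tfrac{g_1}{\hat g_1})\bigl(\ddot e_{lat}(t-\tau)+\mathbf K\boldsymbol\xi(t)\bigr)+\sigma(t)$. Take a history that starts at the straight-road equilibrium with lateral offset $e_0$ (so $\ddot e_{lat}(-\tau)=0$, $\delta_f(-\tau)=0$) and ramps $\delta_f$ continuously to the value the law prescribes at $t=0$. This satisfies (i)--(ii), yet gives $d_{in}(0)=(1-\tfrac{g_1}{\hat g_1})k_1e_0+O(\tau)$, however small $\tau$ is. So ``for all $t\ge0$'' cannot follow from (i)--(ii) alone. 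You need one of two things:
\begin{itemize}
\item a compatible initialization, $\sup_{[-\tau,0]}|\ddot e_{lat}+\mathbf K\boldsymbol\xi|<\overline d_{in}$; or
\item the weaker conclusion $|d_{in}(t)|\le\kappa^{\lfloor t/\tau\rfloor+1}D_0+C\tau/(1-\kappa)$, i.e.\ smallness after a transient of length $O(\tau\log(D_0/\overline d_{in}))$. This is all Proposition~\ref{prop: INDI} actually needs for ultimate boundedness.
\end{itemize}

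For comparison, the paper argues via the pointwise limits $\lim_{\tau\to0^+}\Delta\mathbf y(t,\tau)=\mathbf 0$ and $\lim_{\tau\to0^+}\Delta\ddot e_{lat}(t,\tau)=0$, obtained from continuity with the trajectory implicitly held fixed, and then asserts a uniform $\tau^*$. That route addresses neither issue. Your method-of-steps route, once these two hypotheses are made explicit, is the one that actually yields a bound uniform in $t$.
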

\begin{proof}
Referring back to Assumption \ref{assump: longitudinal}, the values for $v_x$, $\dot v_x$, and $\dot\theta_R$ remain in compact intervals for all $t\in\mathbb{R}$, and $v_x$ is bounded away from $0$. Furthermore, $\boldsymbol{\xi}$ is assumed to lie within a closed ball for all $t\geq-\tau$. 
 
 We first show the uniform boundedness of the internal state $\boldsymbol{\eta}(t)$. By Proposition \ref{prop: internal_ISS}, the internal dynamics is ISS with respect to the inputs \((\boldsymbol{\xi},\dot\theta_R)\). Hence, for all \(t\ge -\tau\),Proposition \ref{prop: internal_ISS} indicates that \cite{khalil2002nonlinear}, for all $t\geq-\tau$,
\begin{equation}
\|\boldsymbol{\eta}(t)\|_2 \leq \beta_\eta\left(\|\boldsymbol{\eta}(-\tau)\|_2, t + \tau \right)
+ \gamma_\eta\Big(\sup_{-\tau \leq t_1 \leq t} \left\| 
\begin{smallmatrix} 
\boldsymbol{\xi}(t_1) \\ 
\dot{\theta}_R(t_1) 
\end{smallmatrix} 
\right\|_2 \Big),
\end{equation}
where $\beta_\eta \in \mathcal{KL}$ and $\gamma_\eta \in \mathcal{K}$. Since $\boldsymbol{\xi}(t) \in \overline{\mathbb{B}}_r$ and $\dot{\theta}_R(t) \in [\dot{\theta}_{R,\min}, \dot{\theta}_{R,\max}]$ for all $t \geq -\tau$, the supremum term in the above equation is uniformly bounded in \(t\). Furthermore,
\begin{equation}
\beta_\eta(\|\boldsymbol{\eta}(-\tau)\|_2, t + \tau) \leq \beta_\eta(\|\boldsymbol{\eta}(-\tau)\|_2, 0),
\end{equation}
hence, $\boldsymbol{\eta}(t)$ belongs to some closed ball $\overline{\mathbb{B}}_{r_1} \forall t\geq-\tau$.

Since $\boldsymbol{\xi}(t)$ and $\boldsymbol{\eta}(t)$ belong to $\overline{\mathbb{B}}_{r}$ and $\overline{\mathbb{B}}_{r_1}$, respectively, we have $\left(\boldsymbol{\xi}(t),\boldsymbol{\eta}(t)\right)\in$ $\overline{\mathbb{B}}_{r_2} \forall t\geq-\tau$, with $r_2=\sqrt{r^2+r_1^2}$. Furthermore, by the diffeomorphism in \eqref{eq: coord_trans}, $\mathbf{x}(t)=M^{-1}(\boldsymbol{\xi}(t),\boldsymbol{\eta}(t))$, so $\mathbf{x}(t)\in\overline{\mathbb{B}}_{r_3}$ $\forall t\geq-\tau$ where $r_3=\|M^{-1}\|_2r_2$. Hence, 
$\left(\dot e_{lat}(t),\tilde\theta(t),\dot{\tilde\theta}(t)\right)\in\overline{\mathbb{B}}_{r_3} \forall t\geq-\tau.$
 
  From the above arguments, $\mathbf{y}(t)$ and $\mathbf{y}(t-\tau)$ belong to a convex compact set, $\mathcal{Y}=\overline{\mathbb{B}}_{r_3} \times[\dot{\theta}_{R,\min}, \dot{\theta}_{R,\max}] \times[v_{x,min},v_{x,max}] \times[\dot v_{x,min},\dot v_{x,max}]$, for all $t\geq0$. Since $\mathbf{y}^*(t)$ lies on the line segment connecting $\mathbf{y}(t-\tau)$ and $\mathbf{y}(t)$, and $\mathcal{Y}$ is convex, it follows that $\mathbf{y}^*(t) \in \mathcal{Y}$ for all $t \geq 0$. Therefore, for the following continuous functions 
 \begin{align}
 &h_1(\mathbf{y}(t-\tau)):=\left.\tfrac{\partial(f_1+d_1\dot\theta_R)}{\partial\mathbf{y}}\right|_{\mathbf{y}=\mathbf{y}(t-\tau)}, \notag \\
 &h_2(\mathbf{y^*}(t)):=\left.\tfrac{\partial^2(f_1+d_1\dot\theta_R)}{\partial\mathbf{y}^2}\right|_{\mathbf{y}=\mathbf{y^*}},
 \end{align}
the arguments of $h_1(\cdot)$ and $h_2(\cdot)$ lie in the compact set $\mathcal{Y}$ for all $t \geq 0$. It follows that $h_1(\cdot)$ and $h_2(\cdot)$ are uniformly bounded for all $t \geq 0$.

Although $\sigma$ and $d_{\mathrm{in}}$ were originally defined in \eqref{ddot_e_discrete} and \eqref{indi_d_in}, respectively, as functions of time $t$ given a fixed $\tau$, we now reinterpret them as functions of both $t$ and $\tau$. Specifically,
\begin{equation}
\label{eq: sigma_redefined}
    \sigma(t,\tau):= h_1(t,\tau)\Delta\mathbf{y}(t,\tau)+\tfrac{1}{2}\Delta\mathbf{y}^T(t,\tau)h_2(t,\tau)\Delta\mathbf{y}(t,\tau), 
\end{equation}
\begin{equation}
\label{eq: d_in_redefined}
    d_{in}(t,\tau):=\left(1-\tfrac{\hat g}{g}\right)\Delta\ddot e_{lat}(t,\tau)+\tfrac{\hat g}{g}\sigma(t,\tau).
\end{equation}
Since all components of $\mathbf{y}(t)$ are continuous in time, we have $\lim_{\tau\to0^+}\Delta\mathbf{y}(t,\tau)=\mathbf{0}$ for every $t\geq0$. 

Since $\ddot e_{lat}$ and $\delta_f$ are assumed continuous on $t\in[-\tau,0]$, by \eqref{eq: increm_law}-\eqref{eq: indi_full_control}, $\delta_f$ is continuous on $t\in[0,\tau]$. Hence, $\ddot e_{lat}$ is continuous on $t\in[0,\tau]$ by \eqref{eq: e_ddot_govern}. Iterating this argument shows that $\ddot e_{lat}$ is continuous for all $t\geq-\tau$. Therefore, $\lim_{\tau\to0^+}\Delta\ddot e_{lat}(t,\tau)=\ddot e_{lat}(t)-\ddot e_{lat}(t)=0$ $\forall t\geq0$. 

Since $h_1(\cdot)$ and $h_2(\cdot)$ are uniformly bounded for all $t\geq 0$, we have $\lim_{\tau\to0^+}d_{in}(t,\tau)=0$ for all $t\geq0$ by \eqref{eq: sigma_redefined}--\eqref{eq: d_in_redefined}. Accordingly, for every $\overline{d}_{in} > 0$, there exists a sufficiently small $\tau^* > 0$ such that, for all $t\geq0$, if $0<\tau < \tau^*$, then $|d_{in}(t,\tau)| < \overline{d}_{in}$ \cite{wang2019stability}.
\end{proof}

Utilizing Lemma \ref{lem: bounded_din}, we have the following proposition that establishes the robustness of the INDI controller. Note that since $\mathbf{K}$ is chosen such that $\mathbf{A_{in}}$ in \eqref{eq: external_dynamics_INDI} is Hurwitz, there exists a positive definite matrix $\mathbf{P}$ satisfying the Lyapunov equation $\mathbf{P}\mathbf{A_{in}}+\mathbf{A_{in}}^T\mathbf{P}=-\mathbf{I}$.

\begin{prop}
\label{prop: INDI}
For the external dynamics in (\ref{eq: external_dynamics_INDI}), if $\ddot e_{lat}(t)$ and $\delta_f(t)$ are continuous on $t\in[-\tau,0]$, then for any $r > 0$,  there exists a constant $\tau^* > 0$ such that for all $0 < \tau < \tau^*$, the state $\boldsymbol{\xi}(t)$ 
is ultimately bounded by an arbitrarily small value for every initial state satisfying $\|\boldsymbol{\xi}(0)\|_2 \leq \sqrt{\tfrac{\lambda_{\min}(\mathbf{P})}{\lambda_{\max}(\mathbf{P})}} r$.

\end{prop}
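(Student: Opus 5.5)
We prove Proposition~\ref{prop: INDI} with a quadratic Lyapunov function for the linear external dynamics \eqref{eq: external_dynamics_INDI}. We treat $d_{in}$ as a bounded perturbation, which Lemma~\ref{lem: bounded_din} supplies, and close the circular dependence with an invariance argument. Take $V(\boldsymbol{\xi})=\boldsymbol{\xi}^T\mathbf{P}\boldsymbol{\xi}$ with $\mathbf{P}$ solving $\mathbf{P}\mathbf{A_{in}}+\mathbf{A_{in}}^T\mathbf{P}=-\mathbf{I}$. Along \eqref{eq: external_dynamics_INDI},
\begin{equation*}
\dot V=-\|\boldsymbol{\xi}\|_2^2+2\boldsymbol{\xi}^T\mathbf{P}\mathbf{B_{in}}d_{in}\le-\|\boldsymbol{\xi}\|_2^2+2\lambda_{\max}(\mathbf{P})\|\boldsymbol{\xi}\|_2|d_{in}|.
\end{equation*}
Hence, for any $\phi\in(0,1)$, we get $\dot V\le-\phi\|\boldsymbol{\xi}\|_2^2$ whenever $\|\boldsymbol{\xi}\|_2\ge \mu:=2\lambda_{\max}(\mathbf{P})\,\overline{d}_{in}/(1-\phi)$. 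This holds as long as $|d_{in}|<\overline{d}_{in}$.

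\textbf{Step 1 (choice of constants).} Fix $r>0$ and a target ultimate bound $b>0$. Choose $\overline{d}_{in}>0$ small enough that $\mu<\min\{b\sqrt{\lambda_{\min}(\mathbf{P})/\lambda_{\max}(\mathbf{P})},\ \sqrt{\lambda_{\min}(\mathbf{P})/\lambda_{\max}(\mathbf{P})}\,r\}$. Let $\tau^*$ be the constant that Lemma~\ref{lem: bounded_din} gives for this $\overline{d}_{in}$ and the ball $\overline{\mathbb{B}}_r$.

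\textbf{Step 2 (forward invariance; the main obstacle).} Lemma~\ref{lem: bounded_din} presupposes $\boldsymbol{\xi}(t)\in\overline{\mathbb{B}}_r$ for all $t\ge-\tau$, which is exactly what we want to prove, so the argument is circular. We break it by contradiction on a first exit time. Consider the sublevel set $\Omega_c=\{V\le c\}$ with $c=\lambda_{\min}(\mathbf{P})r^2$. It lies in $\overline{\mathbb{B}}_r$, and the initial condition $\|\boldsymbol{\xi}(0)\|_2\le\sqrt{\lambda_{\min}/\lambda_{\max}}\,r$ gives $V(\boldsymbol{\xi}(0))\le c$. We also assume the history on $[-\tau,0]$ lies in $\overline{\mathbb{B}}_r$, taking it as part of the initial data. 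Suppose $t_1>0$ is the first time with $V(\boldsymbol{\xi}(t_1))>c$. On $[-\tau,t_1)$ the trajectory stays in $\overline{\mathbb{B}}_r$. Applying the reasoning of Lemma~\ref{lem: bounded_din} on this finite window, which uses only the pointwise smallness of $\Delta\mathbf{y}$ and $\Delta\ddot e_{lat}$ on a compact set, gives $|d_{in}|<\overline{d}_{in}$ there. On the boundary $V=c$ we have $\|\boldsymbol{\xi}\|_2\ge\sqrt{c/\lambda_{\max}}>\mu$, so $\dot V<0$, contradicting the exit. Two points need care. First, the $\tau^*$ from the lemma must not depend on $t_1$; it doesn't, because the compact set $\mathcal{Y}$ depends only on $r$, the ISS bound on $\boldsymbol{\eta}$, and Assumption~\ref{assump: longitudinal}. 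Second, we need continuity of $\ddot e_{lat}$ and $\delta_f$ on $[-\tau,0]$, which propagates forward as in the lemma's proof.

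\textbf{Step 3 (ultimate bound).} With $\boldsymbol{\xi}(t)\in\Omega_c\subset\overline{\mathbb{B}}_r$ for all $t$, Lemma~\ref{lem: bounded_din} applies globally in time, and $|d_{in}(t)|<\overline{d}_{in}$ for all $t\ge0$. Standard ultimate boundedness (Theorem 4.18 in \cite{khalil2002nonlinear}) then shows that $\boldsymbol{\xi}$ enters $\{\|\boldsymbol{\xi}\|_2\le\sqrt{\lambda_{\max}/\lambda_{\min}}\,\mu\}$ in finite time and stays there. By Step 1 this bound is below $b$. Since $b$ was arbitrary, the ultimate bound is arbitrarily small, at the cost of shrinking $\tau^*$.
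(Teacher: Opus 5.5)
Your proof is correct and takes essentially the paper's route: $V=\boldsymbol{\xi}^T\mathbf{P}\boldsymbol{\xi}$, the bound $|d_{in}|<\overline d_{in}$ from Lemma~\ref{lem: bounded_din}, and Theorem 4.18 of \cite{khalil2002nonlinear} with $\overline d_{in}$ chosen small relative to $r$ and the target bound. Your estimate $\|\mathbf{P}\mathbf{B_{in}}\|_2\le\lambda_{\max}(\mathbf{P})$ is only slightly more conservative than the paper's. The one substantive difference is that you explicitly close the circularity---Lemma~\ref{lem: bounded_din} presupposes $\boldsymbol{\xi}(t)\in\overline{\mathbb{B}}_r$, which is what is being proved---with an exit-time argument on $\{V\le\lambda_{\min}(\mathbf{P})r^2\}$, and you state the needed condition on the history over $[-\tau,0]$; the paper leaves both points implicit and simply invokes Theorem 4.18 (in your argument, take $t_1$ as the infimum of exit times, where $V=c$, rather than a ``first time with $V>c$'').
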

\begin{proof}
Taking the Lyapunov function candidate $V=\boldsymbol{\xi}^T\mathbf{P}\boldsymbol{\xi}$, 
\begin{equation}
\implies\lambda_{min}(\mathbf{P})\|\boldsymbol{\xi}\|_2^2\leq V\leq \lambda_{max}(\mathbf{P})\|\boldsymbol{\xi}\|_2^2.
\end{equation}
Let $\phi$ be a constant and $\phi\in(0,1)$. Choose any \(\overline d_{in}>0\) satisfying
\begin{equation}
\label{eq: d_in_bounded}
\overline{d}_{in}<\tfrac{(1-\phi)\sqrt{\lambda_{min}(\mathbf{P})}}{2\|\mathbf{P}\mathbf{B}_{\mathbf{in}}\|_2\sqrt{\lambda_{max}(\mathbf{P})}}\,r.
\end{equation}
Then, by Lemma \ref{lem: bounded_din}, there exists \(\tau^*>0\) such that if \(0<\tau<\tau^*\), we have \(|d_{in}(t)|<\overline d_{in}\) for all \(t\ge0\), provided \(\boldsymbol{\xi}(t)\in\overline{\mathbb B}_r\) for all \(t\ge0\). Then, for all \(\boldsymbol{\xi}(t)\in\overline{\mathbb{B}}_r\), we have
\begin{align}
\label{eq: Lyap_INDI}
    \dot V=&(\mathbf{A_{in}}\boldsymbol{\xi}+\mathbf{B_{in}}d_{in})^T\mathbf{P}\boldsymbol{\xi} + \boldsymbol{\xi}^T\mathbf{P}(\mathbf{A_{in}}\boldsymbol{\xi}+\mathbf{B_{in}}d_{in}) \notag \\
    =&-\|\boldsymbol{\xi}\|_2^2+2d_{in}\mathbf{B_{in}}^T\mathbf{P}\boldsymbol{\xi} \notag\\
    <&-\|\boldsymbol{\xi}\|_2^2+2\overline d_{in}\cdot\|\mathbf{P} \mathbf{B}_{\mathbf{in}}\|_2\cdot \|\boldsymbol{\xi}\|_2 \notag\\
    \leq&-\phi\|\boldsymbol{\xi}\|_2^2,~~~~\forall \|\boldsymbol{\xi}\|_2\geq \tfrac{2\|\mathbf{P} \mathbf{B}_{\mathbf{in}}\|_2 \cdot  \overline d_{in}}{1-\phi} :=\mu_1\overline d_{in}.
\end{align}
Furthermore, by \eqref{eq: d_in_bounded}, we have $ \mu_1\overline d_{in}<\sqrt{\tfrac{\lambda_{min}(\mathbf{P})}{\lambda_{max}(\mathbf{P})}}r$.

Since the last inequality in (\ref{eq: Lyap_INDI}) holds for all $\boldsymbol{\xi}(t)\in\overline{\mathbb{B}}_r$ and $ \mu_1\overline d_{in}<\sqrt{\tfrac{\lambda_{min}(\mathbf{P})}{\lambda_{max}(\mathbf{P})}}r$ by \eqref{eq: d_in_bounded}, according to Theorem 4.18 in \cite{khalil2002nonlinear},  for every initial state $\boldsymbol{\xi}(0)$ satisfying $\|\boldsymbol{\xi}(0)\|_2\leq\sqrt{\tfrac{\lambda_{min}(\mathbf{P})}{\lambda_{max}(\mathbf{P})}}r$, the state $\boldsymbol{\xi}(t)$ has an ultimate bound $\sqrt{\tfrac{\lambda_{max}(\mathbf{P})}{\lambda_{min}(\mathbf{P})}}\mu_1\overline{d}_{in}$, which is a class $\mathcal{K}$ function of $\overline{d}_{in}$. 

By Lemma~\ref{lem: bounded_din}, for any sufficiently small \(\overline d_{in}>0\) satisfying \eqref{eq: d_in_bounded}, there exists a corresponding \(\tau^*>0\) such that, if \(0<\tau<\tau^*\), then \(|d_{in}(t)|<\overline d_{in}\) for all \(t\ge0\). Therefore, the ultimate bound of \(\boldsymbol{\xi}(t)\) can be made arbitrarily small by choosing \(\tau\) sufficiently small. This proves the proposition
\end{proof}
\begin{rem}
Proposition \ref{prop: INDI} does not impose an a priori restriction on the magnitude of \(r\). Hence, the admissible set of initial states, defined by \(\|\boldsymbol{\xi}(0)\|_2 \leq \sqrt{\tfrac{\lambda_{\min}(\mathbf{P})}{\lambda_{\max}(\mathbf{P})}}\, r\), can be made arbitrarily large. However, the corresponding constant \(\tau^*\) may depend on the choice of \(r\). Therefore, Proposition \ref{prop: INDI} establishes semi-global ultimate boundedness.
\end{rem}
\begin{rem}
\label{rem: INDI_tau}
Although Proposition \ref{prop: INDI} does not explicitly characterize how the ultimate bound on \(\boldsymbol{\xi}(t)\) varies with \(\tau\) (as in Remark \ref{rem: LR_epsilon}), \eqref{eq: sigma_redefined}--\eqref{eq: d_in_redefined} indicate that \(d_{in}(t,\tau)\) depends on the difference terms \(\Delta \mathbf{y}(t,\tau)\) and \(\Delta \ddot e_{lat}(t,\tau)\), both of which decrease as \(\tau\to 0\). This suggests that choosing a smaller \(\tau\) tends to produce a smaller bound \(\overline d_{in}\), and hence a tighter ultimate bound on \(\boldsymbol{\xi}(t)\) (which is a class \(\mathcal{K}\) function of \(\overline d_{in}\)).
\end{rem}

\section{Simulation Results}
\label{sec5}
To systematically assess the proposed controllers across diverse operating conditions, simulations are performed in Simulink using vehicle parameters identified from a Lincoln MKZ \cite{liu2020lateral}. Unlike the real-vehicle experiment presented later, these simulations allow extensive testing across different speeds, parameter uncertainties, and controller settings in a safe, controlled environment. A dual-track vehicle module with aerodynamics from the Simulink Automated Driving Toolbox is used to obtain a realistic test platform. Fig. \ref{fig: track}(a) shows the test track and reference path: the gray area is the track, the blue line is the reference path, and the blue box is the vehicle’s initial position. Starting near the track center, the vehicle completes one lap with four lane changes. The path’s peak curvature is about $0.012\,\mathrm{m}^{-1}$, as shown in Fig. \ref{fig: track}(b).
\begin{figure}[h]
    \centering
    \subfigure[test track and reference path]{\includegraphics[width=0.215\textwidth]{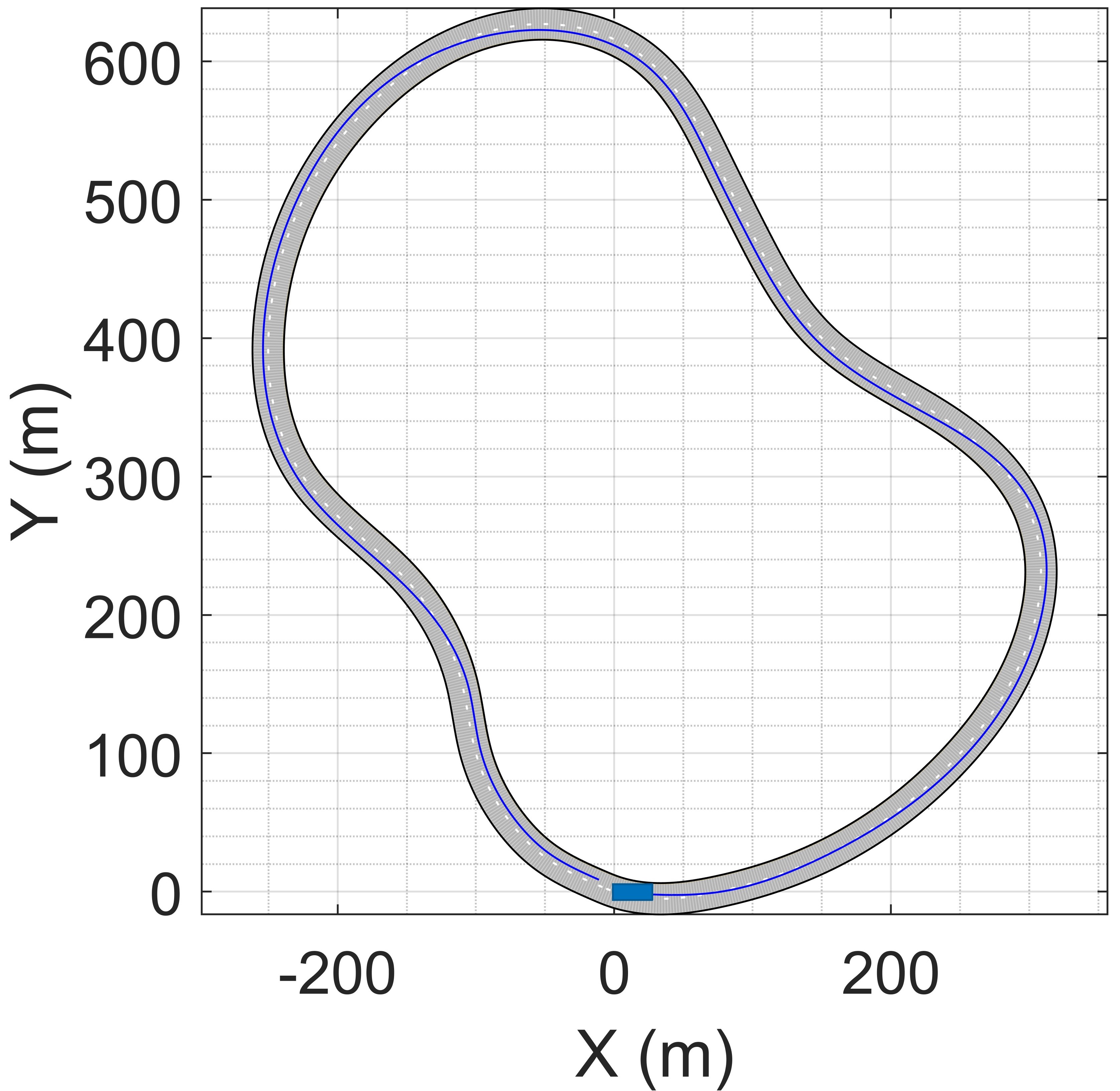}}
    \subfigure[curvature profile]{\includegraphics[width=0.26\textwidth]{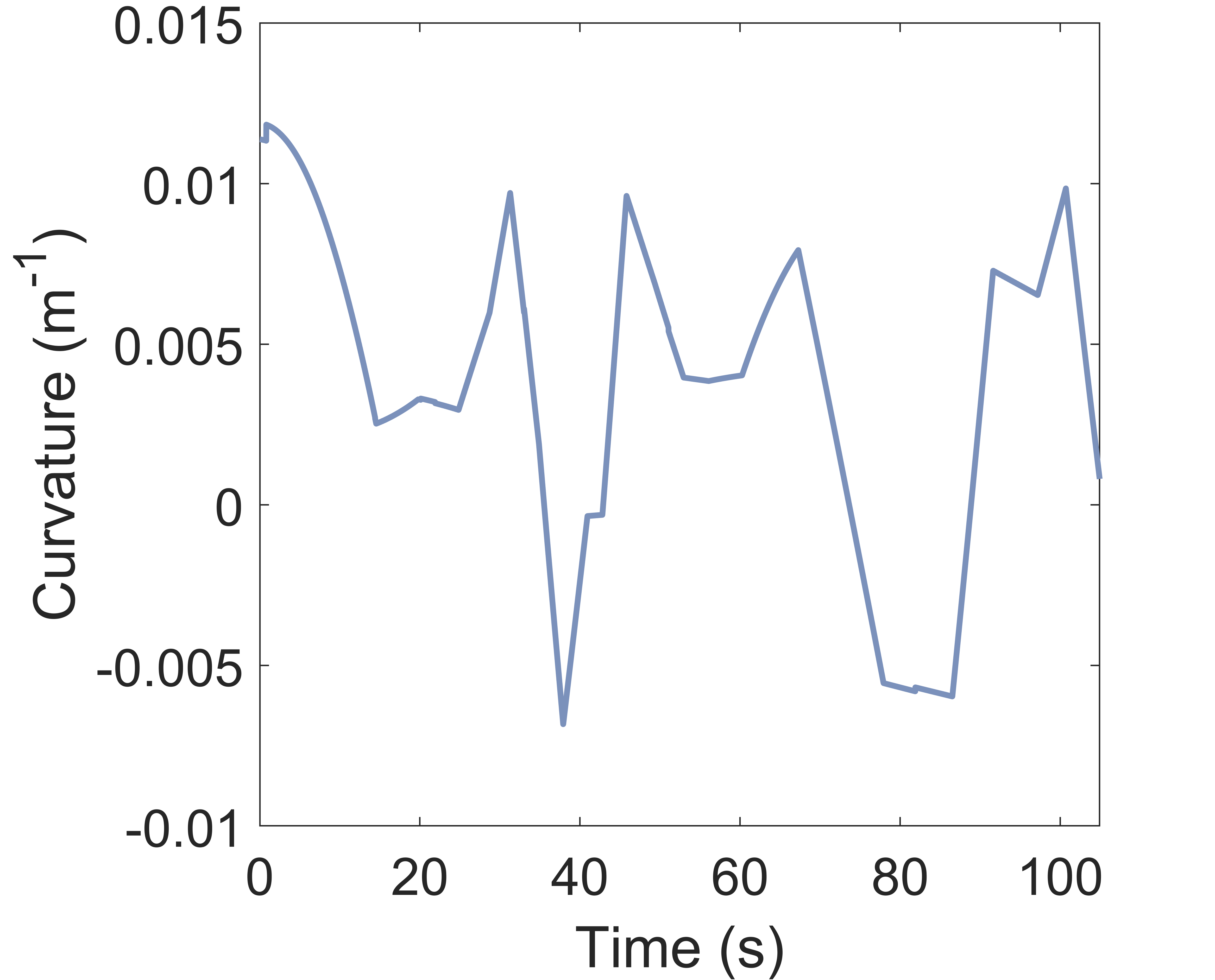}}
    \caption{Test track, reference path, and curvature profile}
    \label{fig: track}
    \end{figure}

The nominal parameter values and their upper and lower limits are listed in Table \ref{table: parameters}. These nominal values are obtained offline from a Lincoln MKZ \cite{liu2020lateral}. Following \cite{peng1990lateral}, cornering stiffness is the dominant source of vehicle-dynamics uncertainty. Therefore, $C_f$ and $C_r$ are allowed to deviate by up to $80\%$ from their nominal values, while the uncertainties in $m$ and $I_z$ remain relatively minor, within $30\%$. As stated in Assumption \ref{assump: parameters}, $a$ and $b$ are considered fixed and known.
\begin{table}[ht]
\centering
\caption{Parameters of the Lincoln MKZ}
\resizebox{1\linewidth}{!}{
\begin{tabular}{cccccc}
\hline
\hline
\textbf{Parameter} & \textbf{Symbol} & \textbf{Unit} & \textbf{Nominal} & \makecell{\textbf{Min}\\\textbf{(relative)}} & \makecell{\textbf{Max}\\\textbf{(relative)}} \\
\hline
Mass & $m$ & kg & 1896 & 1 & 1.3 \\
Moment of Inertia & $I_z$ & kg$\cdot$m$^2$ & 3803 & 1 & 1.3 \\
Front Cornering Stiffness & $C_{f}$ & N/rad & 400000 & 0.2 & 1.8 \\
Rear Cornering Stiffness & $C_{r}$ & N/rad & 381900 & 0.2 & 1.8 \\
C.G. to front axle & $a$ & m & 1.2682  & -- & -- \\
C.G. to rear axle & $b$ &m & 1.5818  & -- & -- \\
\hline
\hline
\label{table: parameters}
\end{tabular}}
\end{table}

The LR controller uses the measurements $e_{lat}$, $\dot e_{lat}$, $\tilde\theta$, $\dot{\tilde\theta}$, $\dot{\theta}_R$, $v_x$, and $\dot v_x$. With the parameter ranges in Table \ref{table: parameters}, the coefficients in \eqref{eq: purterbation bound} can be evaluated as
\begin{equation}
    \label{eq: alpha_bound_value}
 \max(|\alpha_\Delta|)=\max\left(\left|\tfrac{C_f\hat m}{\hat C_f m}-1\right|\right)=\left|\tfrac{(0.2\hat C_f)\hat m}{\hat C_f(1.3\hat m)}-1\right|={0.846},
\end{equation}
\begin{align}
    \label{eq: gamma_bound_value}\max\left(|\gamma_\Delta|\right)&=\max\left(\left|\tfrac{1}{m}\left(C_r-C_f\tfrac{\hat C_r}{\hat C_f}\right)\right|\right) \notag \\
    &=\left|\tfrac{1}{\hat m}\left(0.2\hat C_r -1.8\hat C_f\tfrac{\hat C_r}{\hat C_f} \right)\right|=322.3.
\end{align}
Substituting \eqref{eq: alpha_bound_value} and \eqref{eq: gamma_bound_value} into \eqref{eq: purterbation bound} and comparing the result with \eqref{eq: perturb_bound} yields the expression for $\rho(\cdot)$ and the value of $\kappa_0$.

In contrast, the INDI controller only assumes that the parameters are bounded, without needing their exact limits. It does, however, use additional measurements of $\ddot e_{lat}$ and $\delta_f$.

Unless stated otherwise, the proposed controllers use the tuned parameters in Table~\ref{table: controller_parameters}. These defaults are chosen to provide a good balance between tracking accuracy and control smoothness in the upcoming nominal test.
\begin{table}[ht]
\centering
\scriptsize
\caption{Default Parameters of the Proposed Controllers}
\begin{tabular}{c c c c| c c c}
\hline\hline
\multicolumn{4}{c|}{\textbf{LR Controller}} & \multicolumn{3}{c}{\textbf{INDI Controller}} \\
\hline
$k_1$ & $k_2$ & $\epsilon$ & $\rho_0$ & $k_1$ & $k_2$ & $\tau$ \\
0.74  & 4.81  & 0.35       & 0.4      & 5.66  & 10.09 & 0.03  \\
\hline\hline
\end{tabular}
\label{table: controller_parameters}
\end{table}
\subsection{Nominal test with varying speed and acceleration}
This subsection assesses how incorporating longitudinal motion awareness into the lateral controller affects performance under varying speed and acceleration. The Simulink vehicle uses the nominal parameter values in Table \ref{table: parameters}. The vehicle follows the speed profile in Fig. \ref{fig: vel_prof}: it accelerates to $30\, \text{m/s}$, holds this speed, then decelerates to $10\, \text{m/s}$ and maintains it until the test concludes. This profile is selected to expose the controller to a wide speed range and both acceleration and deceleration phases, thereby directly evaluating the impact of longitudinal-motion awareness on lateral tracking.
    \begin{figure}[h]
    \centering
    \subfigure[speed]{\includegraphics[width=0.23\textwidth]{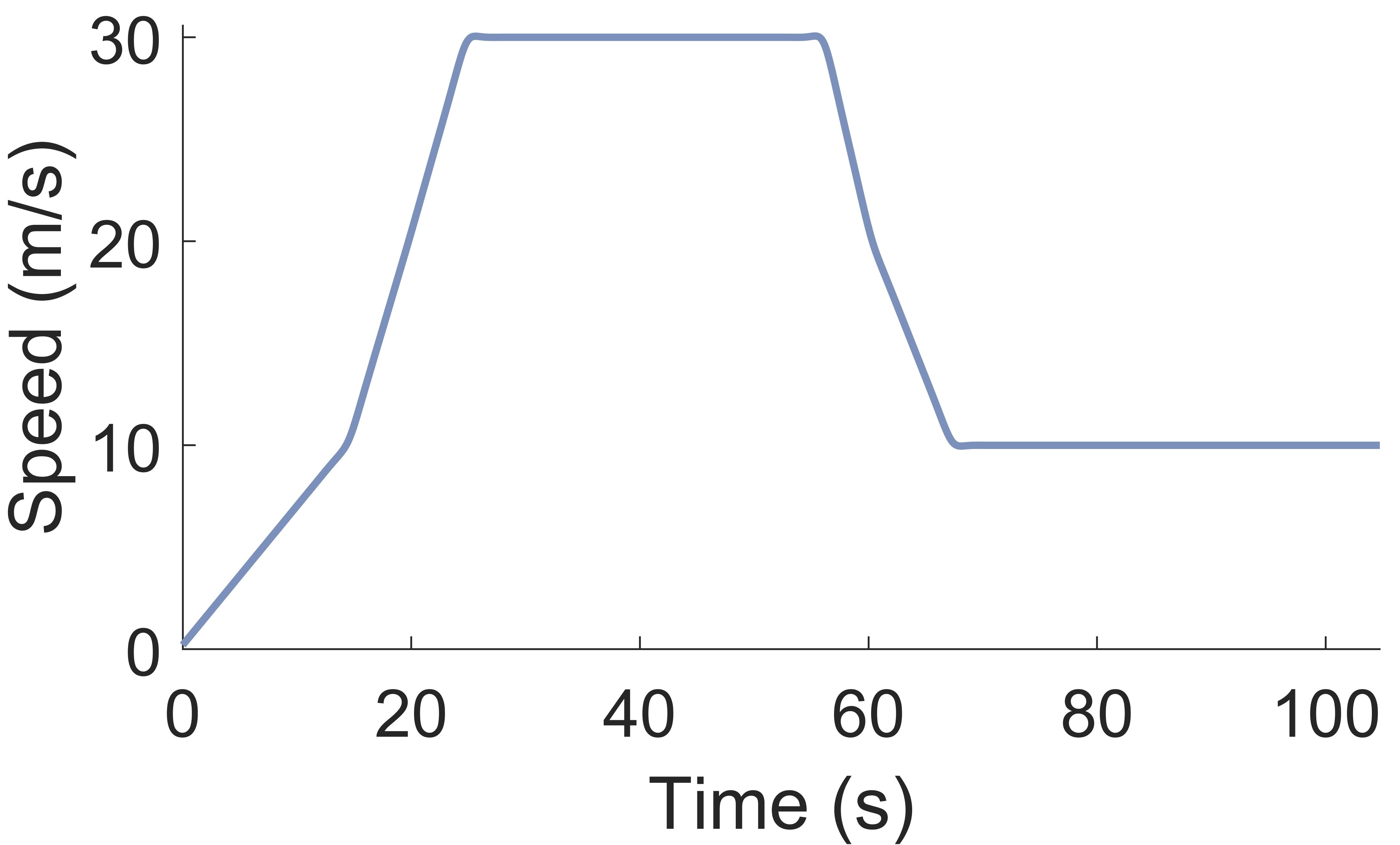}}
    \subfigure[acceleration]{\includegraphics[width=0.23\textwidth]{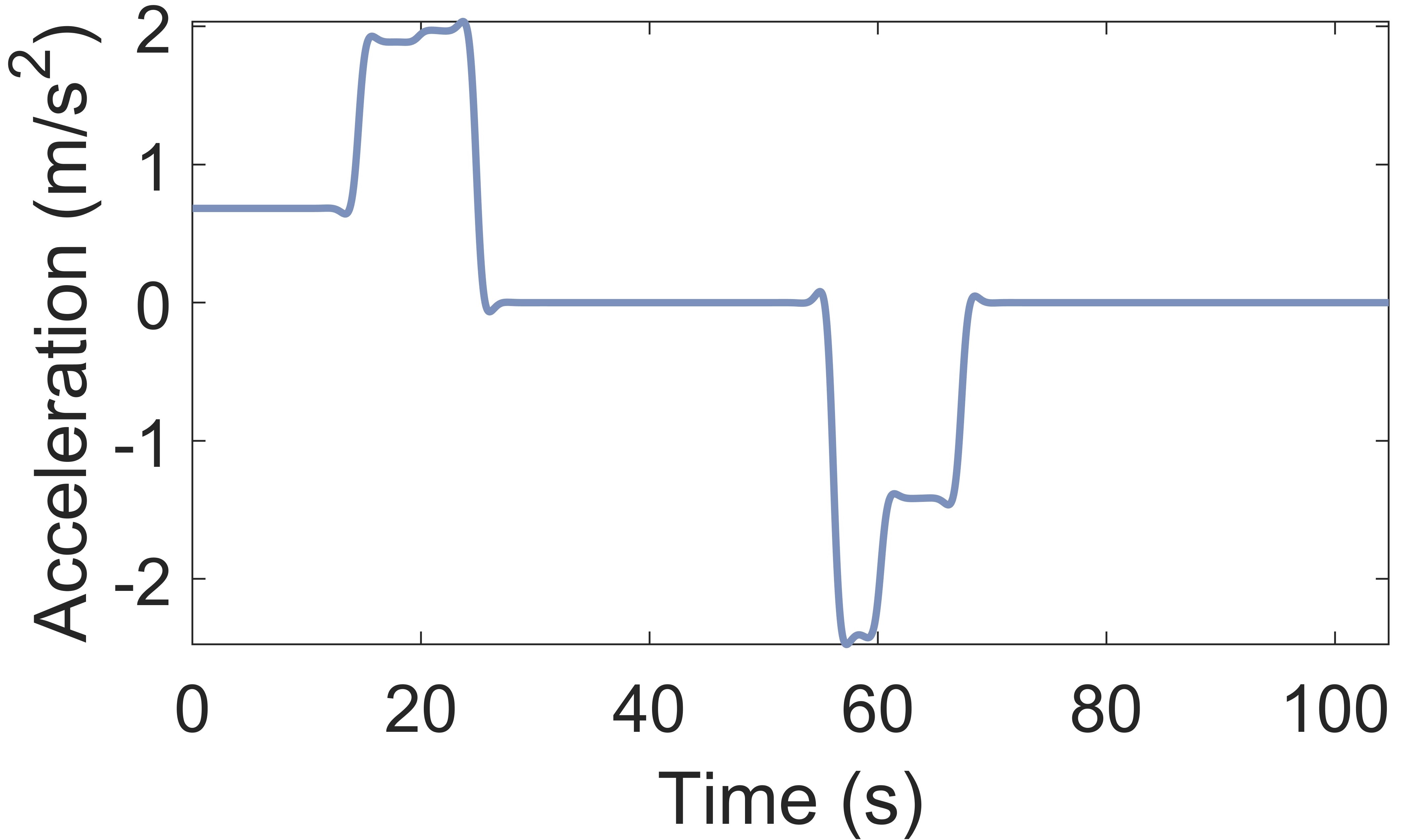}}
    \caption{Speed and acceleration profile of the vehicle}
    \label{fig: vel_prof}
    \end{figure}

The proposed controllers are compared against a widely used feedback–feedforward (FF) lateral controller \cite{peng1992vehicle,rajamani2011vehicle,liu2020lateral}. As a baseline, we adopt the specific design in \cite{liu2020lateral}, which models longitudinal speed as an unknown but bounded parameter and selects feedback gains to ensure stability for all speeds in $(0,30]\,\mathrm{m/s}$. Originally derived for constant speeds, this controller was later shown to remain stabilizing under varying speeds \cite{liu2020robust}. Because this baseline was developed for the same Lincoln MKZ, its gains can be applied here directly without retuning. The aim of this comparison is not to exhaustively benchmark all controller classes, but to demonstrate the benefit of explicitly incorporating longitudinal speed and acceleration in the lateral control law. Thus, this representative baseline, which treats speed variation in a conventional manner, offers a clearer reference than alternatives based on substantially different design paradigms, which could introduce confounding factors. We refer to it as the FF controller.

For the LR, INDI, and FF controllers, the tracking errors $e_{lat},~\dot e_{lat}, ~\tilde\theta$, $\dot{\tilde\theta}$, and the control $\delta_f$ are plotted in Fig. \ref{fig: nominal_test}.
    \begin{figure}[h]
    \centering
    {\includegraphics[width=0.49\textwidth]{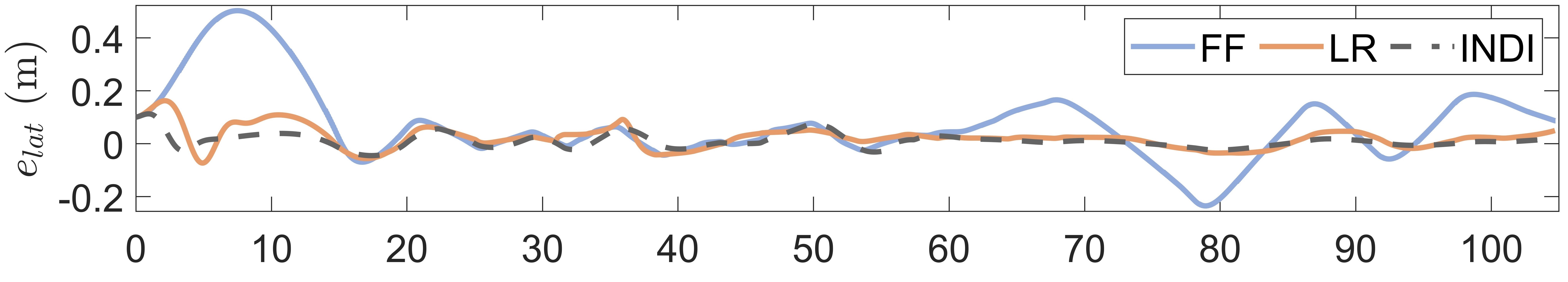}}
    {\includegraphics[width=0.49\textwidth]{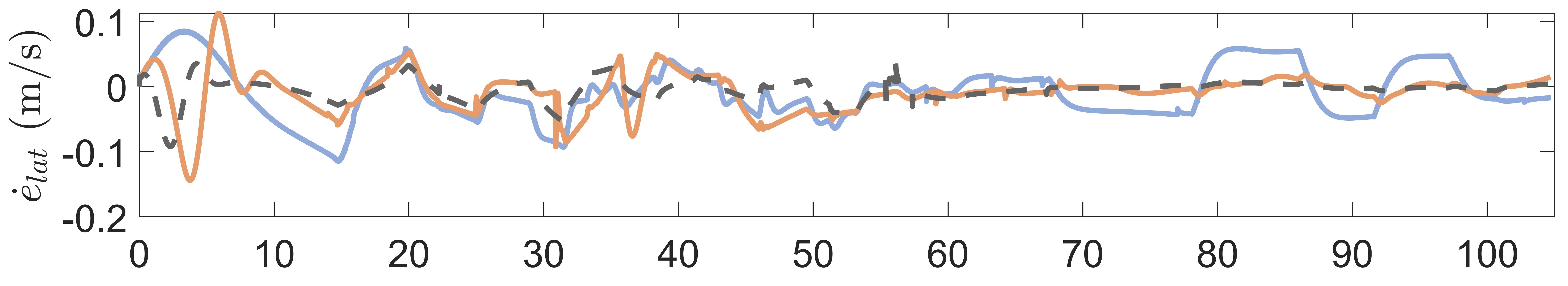}}
    {\includegraphics[width=0.49\textwidth]{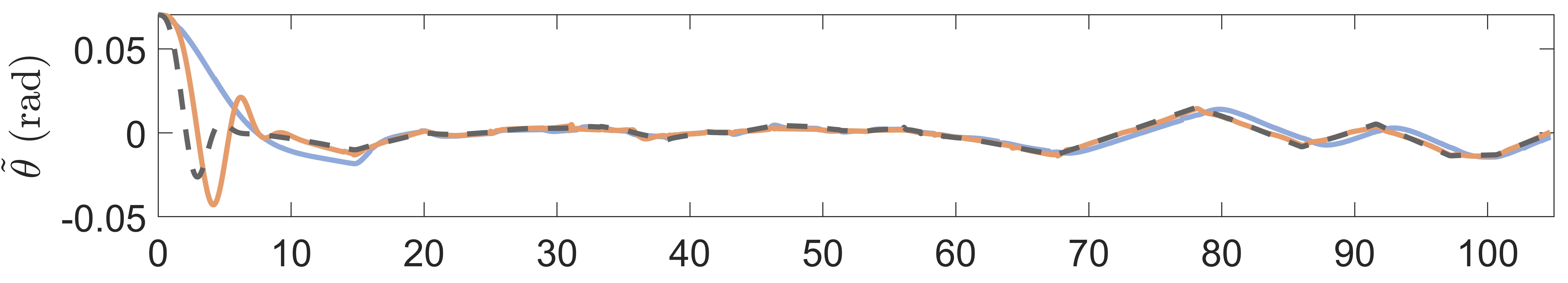}}
    {\includegraphics[width=0.49\textwidth]{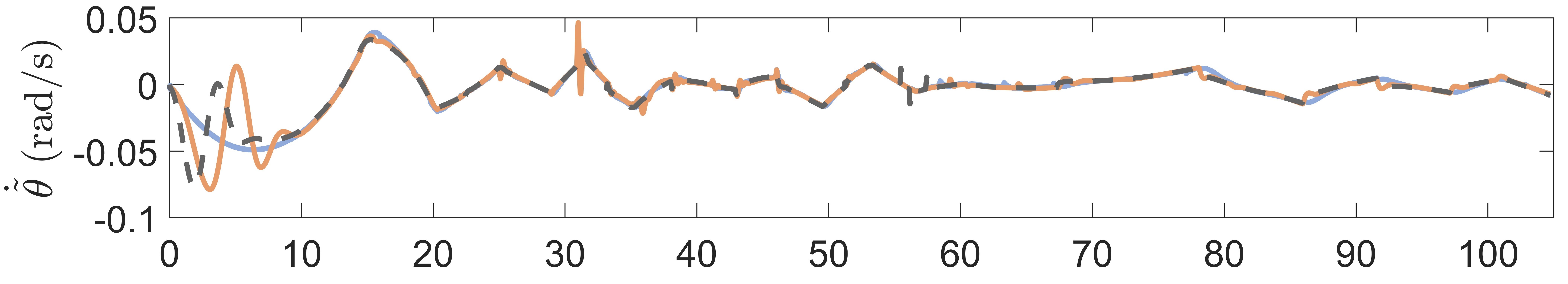}}
{\includegraphics[width=0.49\textwidth]{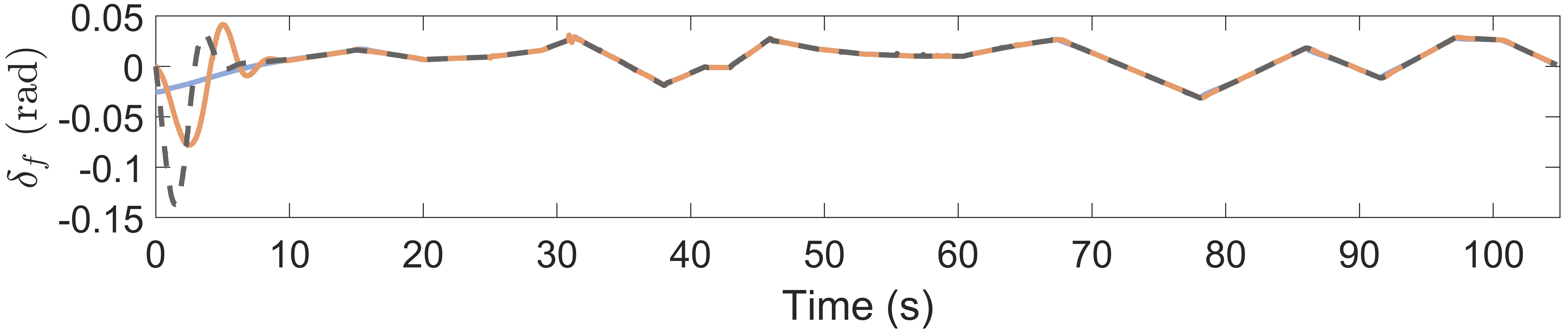}}
    \caption{Results of the nominal test. Top to bottom: (a) lateral error, (b) lateral error rate, (c) heading error, (d) heading error rate, (e) steering angle.}
    \label{fig: nominal_test}
    \end{figure}

From Fig. \ref{fig: nominal_test}(a), the LR and INDI controllers eliminate the initial lateral error much faster than the FF controller. This is achieved by feedback linearizing the input–output relation between the steering angle and lateral error, allowing LR and INDI to act directly on the lateral error. Moreover, LR explicitly uses longitudinal speed and acceleration, while INDI uses them implicitly via the $\ddot e_{lat}$ measurement. Consequently, both controllers deliver accurate and consistent performance across a broad range of speeds and accelerations. In contrast, the FF controller exhibits larger lateral errors at low speeds than at high speeds, highlighting the value of integrating longitudinal motion awareness into the lateral controller. Although heading error and its rate are not explicitly regulated external states for LR and INDI, Fig. \ref{fig: nominal_test}(c)–(d) shows that they remain bounded, consistent with Proposition \ref{prop: internal_ISS}.

\subsection{Robustness test}
In this subsection, we assess the robustness of the LR and INDI controllers to parametric uncertainty and compare them with the baseline FF controller. To this end, four vertex cases from the uncertainty set, listed in Table \ref{table: robustness_cases}, are analyzed. These cases correspond to extreme combinations of the admissible parameter ranges, yielding a conservative robustness assessment. The speed profile in Fig. \ref{fig: vel_prof} is used.
\begin{table}[ht]
\centering
\scriptsize
\caption{Four Vertex Cases for Robustness Evaluation}
\begin{tabular}{c c c c c}
\hline\hline
\textbf{Case} & $m$ & $I_z$ & $C_f$ & $C_r$ \\
\hline
1 & $1.3\hat{m}$ & $1.3\hat{I}_z$ & $1.8\hat{C}_f$ & $0.2\hat{C}_r$ \\
2 & $\hat{m}$    & $\hat{I}_z$    & $1.8\hat{C}_f$ & $0.2\hat{C}_r$ \\
3 & $1.3\hat{m}$ & $1.3\hat{I}_z$ & $0.2\hat{C}_f$ & $1.8\hat{C}_r$ \\
4 & $\hat{m}$    & $\hat{I}_z$    & $0.2\hat{C}_f$ & $1.8\hat{C}_r$ \\
\hline\hline
\end{tabular}
\label{table: robustness_cases}
\end{table}

For illustration, the evolution of the external states and steering angle of Case 1 is depicted below in Fig. \ref{fig: robust_test}.
    \begin{figure}[h]
    \centering
    {\includegraphics[width=0.49\textwidth]{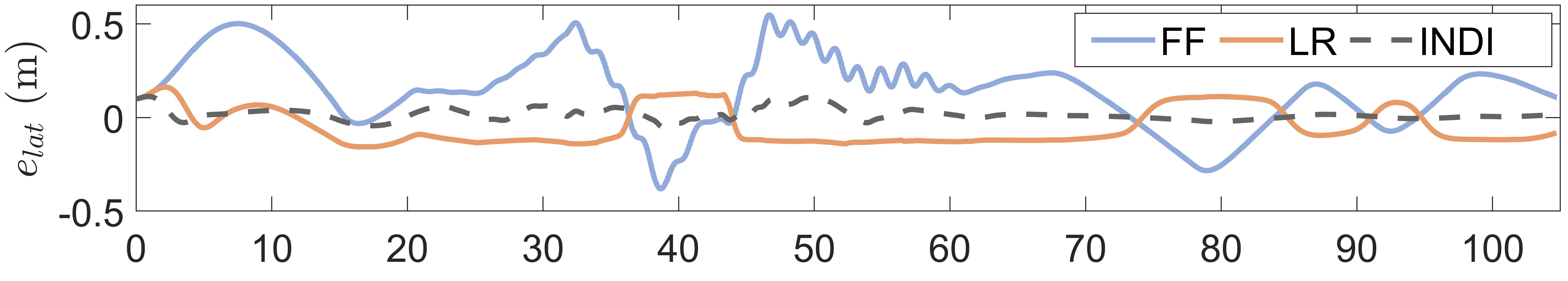}}
    {\includegraphics[width=0.49\textwidth]{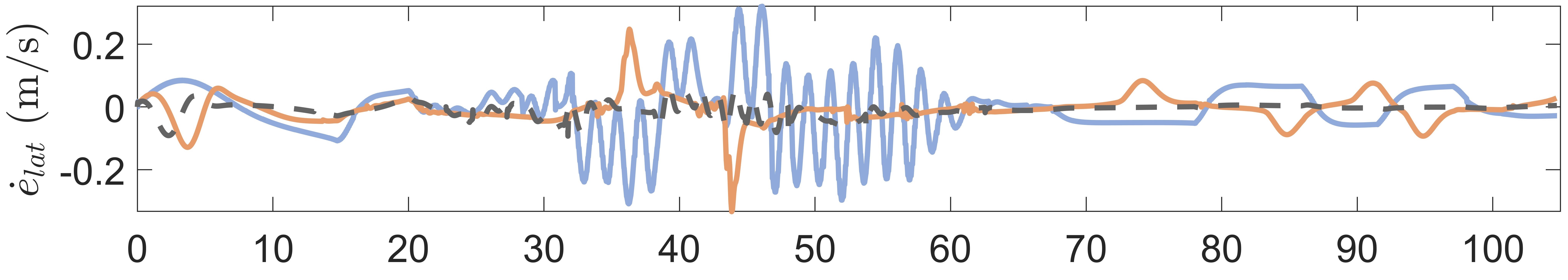}}
{\includegraphics[width=0.49\textwidth]{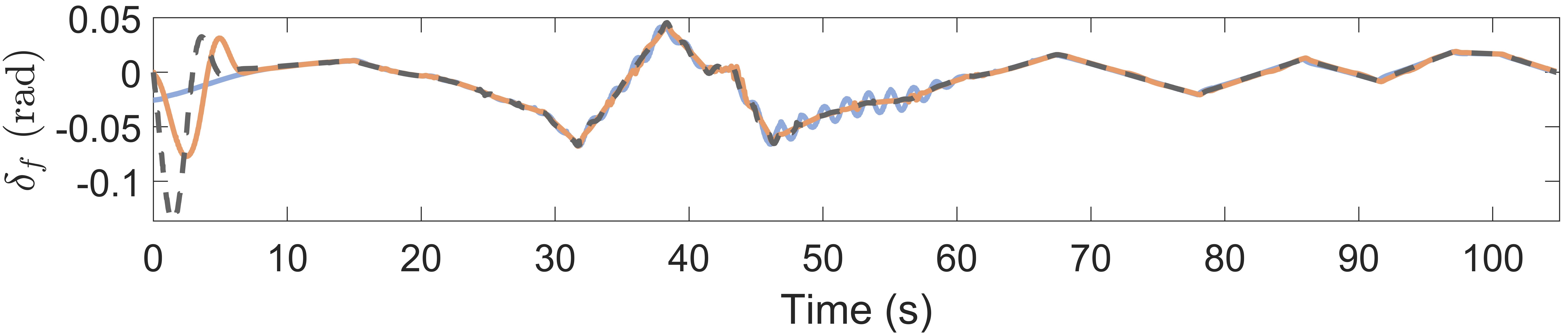}}
    \caption{Comparison of controllers in robustness test Case 1. Top to bottom: (a) lateral error, (b) lateral error rate, (c) steering angle.}
    \label{fig: robust_test}
    \end{figure}

  As shown in Fig. \ref{fig: robust_test}(a)–(b), in Case 1 the LR and INDI controllers achieve substantially better tracking accuracy than the FF controller and display less oscillatory behavior, indicating enhanced robustness to parametric uncertainty.

For a quantitative comparison, Table \ref{table: rms_cases} reports the root-mean-square (RMS) values of \(e_{lat}\), \(\dot e_{lat}\), and \(\|\boldsymbol{\xi}\|_2\) for the controllers in the vertex cases. RMS serves as a practical indicator of tracking accuracy and robustness, as it is difficult to infer from data whether the states have reached their ultimate bounds. The table also shows the relative RMS reductions of the proposed controllers compared to the baseline FF controller. Moreover, Figure \ref{fig: radar_rms} provides a compact visual comparison of the \(\|\boldsymbol{\xi}\|_2\) RMS values across the vertex cases.
\begin{table}[ht]
\centering
\scriptsize
\caption{RMS Comparison of Controllers Across Vertex Cases}
\begin{tabular}{ccc cc}
\hline
\hline
\textbf{Case} & \textbf{Controller} & $\boldsymbol{e}_{lat}~\textbf{RMS} $ & $ \dot{\boldsymbol{e}}_{lat}~\textbf{RMS}$ & $ \|\boldsymbol{\xi}\|_2~\textbf{RMS}$
 \\
\hline
\multirow{3}{*}{1} & FF & 0.2407 $(\downarrow 0\%)$ & 0.0885 $(\downarrow 0\%)$ & 0.2565 $(\downarrow 0\%)$
\\
                   & LR     & 0.1082 $(\downarrow 55\%)$ & 0.0458 $(\downarrow 48\%)$ & 0.1175 $(\downarrow 54\%)$ \\
                   & INDI   & \textbf{0.0341} $(\downarrow 86\%)$ & \textbf{0.0226} $(\downarrow 74\%)$ & \textbf{0.0409} $(\downarrow 84\%)$\\
\hline
\multirow{3}{*}{2} & FF & 0.2409 $(\downarrow 0\%)$ & 0.0871 $(\downarrow 0\%)$ & 0.2562 $(\downarrow 0\%)$
\\
                   & LR     & 0.1082 $(\downarrow 55\%)$ & 0.0458 $(\downarrow 47\%)$ & 0.1175 $(\downarrow 54\%)$ \\
                   & INDI   & \textbf{0.0341} $(\downarrow 86\%)$ & \textbf{0.0226} $(\downarrow 74\%)$ & \textbf{0.0409} $(\downarrow 84\%)$\\
\hline
\multirow{3}{*}{3} & FF & 0.4412 $(\downarrow 0\%)$ & 0.1494 $(\downarrow 0\%)$ &  0.4658 $(\downarrow 0\%)$
\\
                   & LR     & 0.2971 $(\downarrow 33\%)$ & 0.0903 $(\downarrow 40\%)$ & 0.3105 $(\downarrow 33\%)$\\
                   & INDI   & \textbf{0.0311} $(\downarrow 93\%)$ & \textbf{0.0180} $(\downarrow 88\%)$ & \textbf{0.0360} $(\downarrow 92\%)$\\
\hline
\multirow{3}{*}{4} & FF & 0.4397 $(\downarrow 0\%)$ & 0.1492 $(\downarrow 0\%)$ & 0.4643 $(\downarrow 0\%)$
\\
                   & LR     & 0.2961 $(\downarrow 33\%)$ & 0.0901 $(\downarrow 40\%)$ & 0.3095 $(\downarrow 33\%)$\\
                   & INDI   & \textbf{0.0310} $(\downarrow 93\%)$ & \textbf{0.0179} $(\downarrow 88\%)$ & \textbf{0.0360}$(\downarrow 92\%)$\\
\hline
\hline
\end{tabular}
\label{table: rms_cases}
\end{table}
\begin{figure}[htb!]
    \centering
    \includegraphics[width=0.2\textwidth]{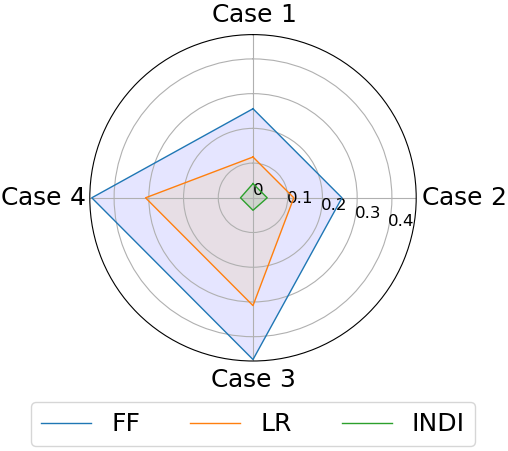}
    \caption{$\|\boldsymbol{\xi}\|_2$ RMS radar chart: controller comparison in vertex cases}
    \label{fig: radar_rms}
\end{figure}

Table \ref{table: rms_cases} and Fig. \ref{fig: radar_rms} illustrate that both LR and INDI significantly outperform the baseline in every vertex case. Among the three controllers, INDI consistently achieves the lowest RMS values, with reductions of up to 93\% in \(e_{lat}\), 88\% in \(\dot e_{lat}\), and 92\% in \(\|\boldsymbol{\xi}\|_2\) relative to the baseline. LR also yields substantial improvements, with reductions of up to 55\% in \(e_{lat}\), 48\% in \(\dot e_{lat}\), and 54\% in \(\|\boldsymbol{\xi}\|_2\). Mass and moment of inertia variations (Cases 1 and 3 vs. Cases 2 and 4) appear to have a limited impact on the performance of all controllers. Notably, LR is relatively more sensitive to changes in cornering stiffness (Cases 1-2 vs. 3-4), while INDI maintains robust and consistent performance across all cases.

\subsection{Effect of robustness-related parameters}
As discussed in Remark~\ref{rem: LR_epsilon}, reducing $\epsilon$ enhances the robustness of LR. While the effect is less conclusive for INDI, Remark~\ref{rem: INDI_tau} indicates that decreasing $\tau$ may offer a similar robustness benefit. In this subsection, we study how these two parameters influence the trade-off between robustness and control smoothness, using the vertex cases of vehicle parameters in Table \ref{table: robustness_cases} and the speed profile in Fig. \ref{fig: vel_prof}.

For the LR controller, we examine $\epsilon$ values of 0.25, 0.35 (default), and 0.45. The INDI controller is tested with $\tau$ values of 0.01s, 0.03s (default), and 0.05s. Figs. \ref{fig: LR_test} and \ref{fig: INDI_test} depict the evolution of external states and steering for the LR controllers and INDI controllers, respectively, under Case 1.
 \begin{figure}[h]
    \centering
    {\includegraphics[width=0.49\textwidth]{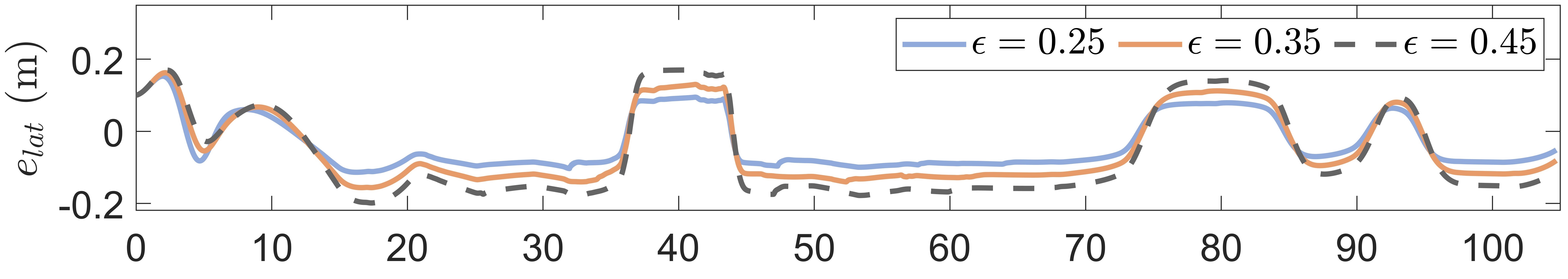}}
    {\includegraphics[width=0.49\textwidth]{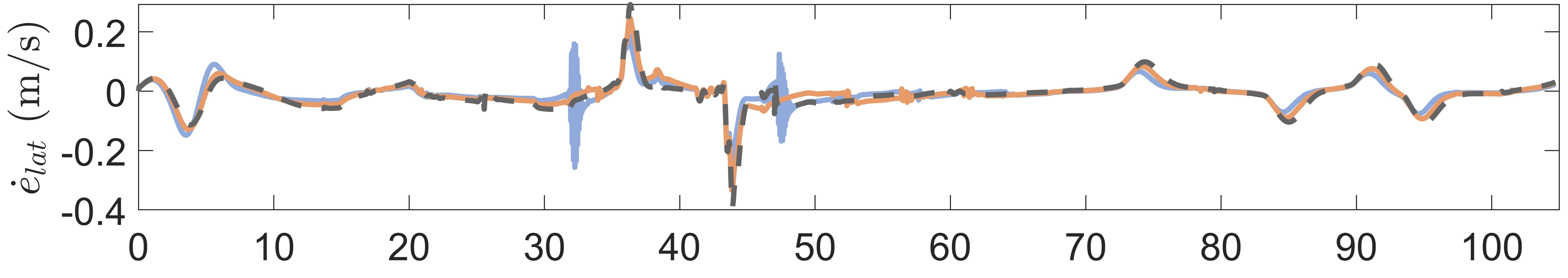}}
{\includegraphics[width=0.49\textwidth]{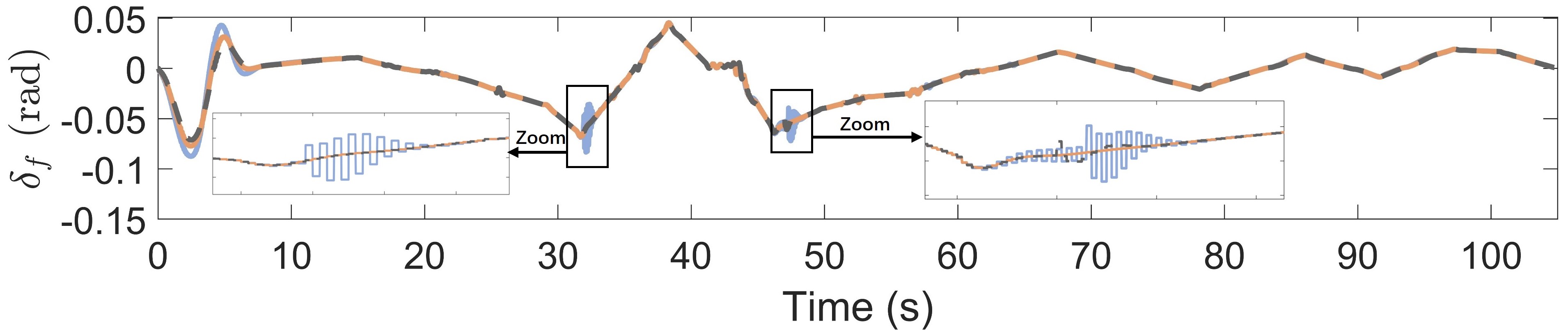}}
    \caption{Comparison of different $\epsilon$ for LR in Case 1. Top to bottom: (a) lateral error, (b) lateral error rate, (c) steering angle.}
    \label{fig: LR_test}
    \end{figure}
\begin{figure}[h]
    \centering
    {\includegraphics[width=0.49\textwidth]{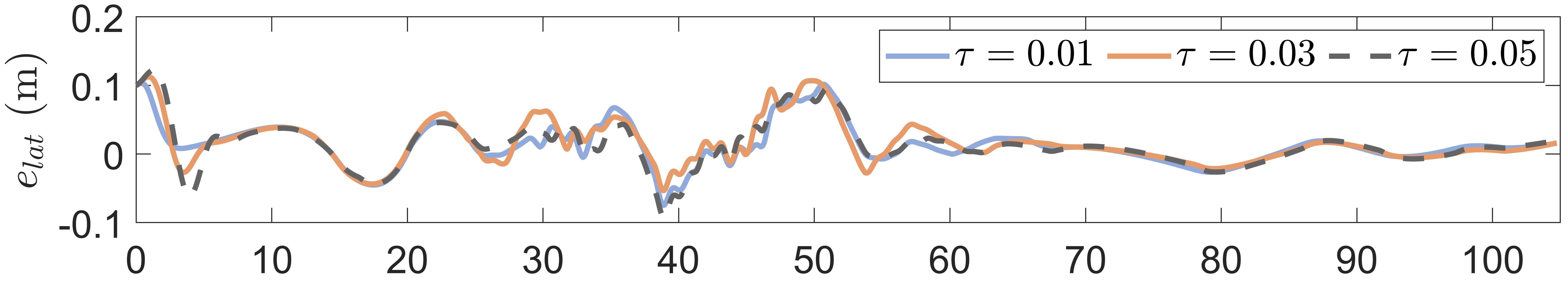}}
    {\includegraphics[width=0.49\textwidth]{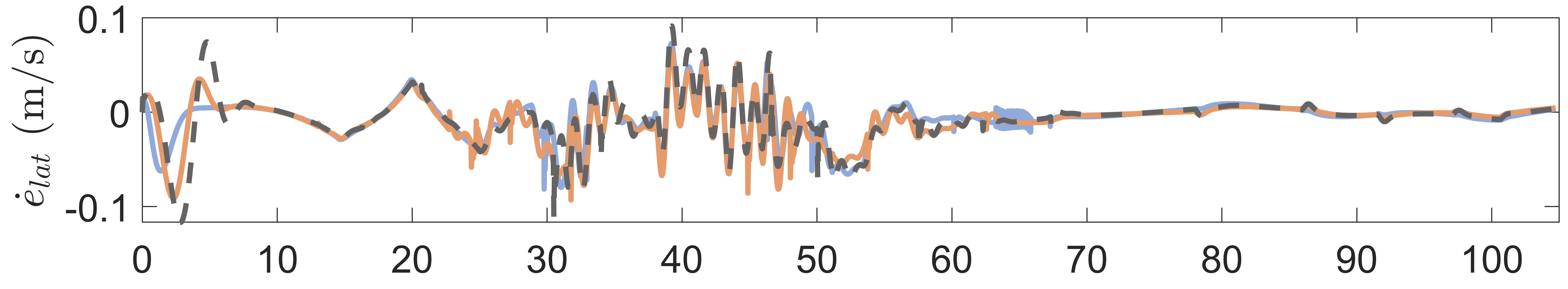}}
{\includegraphics[width=0.49\textwidth]{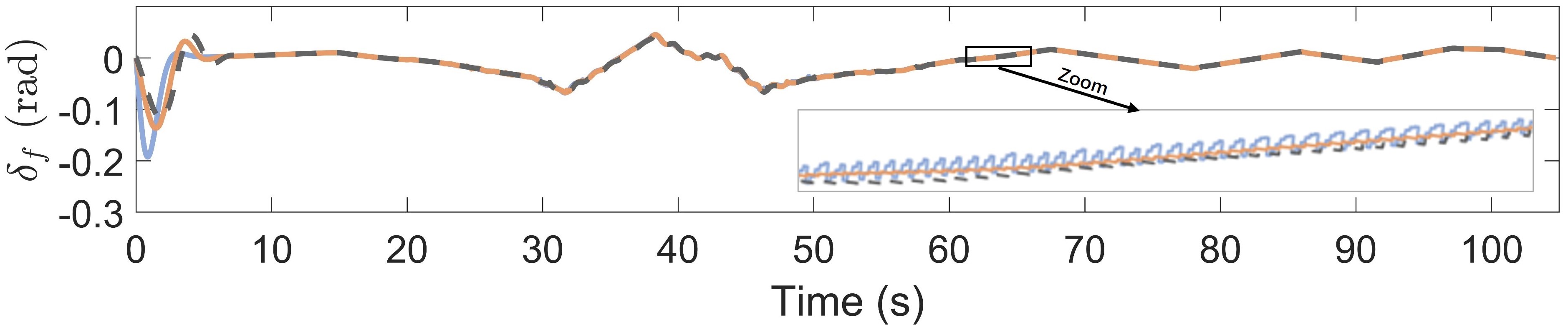}}
    \caption{Comparison of different $\tau$ for INDI in Case 1. Top to bottom: (a) lateral error, (b) lateral error rate, (c) steering angle.}
    \label{fig: INDI_test}
    \end{figure}

In Fig. \ref{fig: LR_test}(a), a noticeable decrease in lateral error $e_{lat}$ is observed as the value of $\epsilon$ decreases. However, when $\epsilon$ reaches 0.25, significant chattering effects appear, as illustrated in the zoomed-in view of Fig. \ref{fig: LR_test}(c). While Fig. \ref{fig: INDI_test} does not exhibit an equally clear trend in accuracy with decreasing $\tau$, Fig. \ref{fig: INDI_test}(c) reveals that reducing $\tau$ to 0.01s also results in slight chattering.

For a clearer comparison, we compute the RMS of the external state 2-norm, $\|\boldsymbol{\xi}\|_2$, over all four vertex cases. The results, shown in Fig.~\ref{fig: param_sweep_radar}, use separate radar charts to compare different $\epsilon$ values for LR and $\tau$ values for INDI. As seen in the figure, the RMS of $\|\boldsymbol{\xi}\|_2$ decreases with smaller $\epsilon$ for LR and smaller $\tau$ for INDI, consistent with Remarks \ref{rem: LR_epsilon} and \ref{rem: INDI_tau}.
\begin{figure}[h]
    \centering
    \subfigure[LR controllers with different $\epsilon$]{\includegraphics[width=0.24\textwidth]{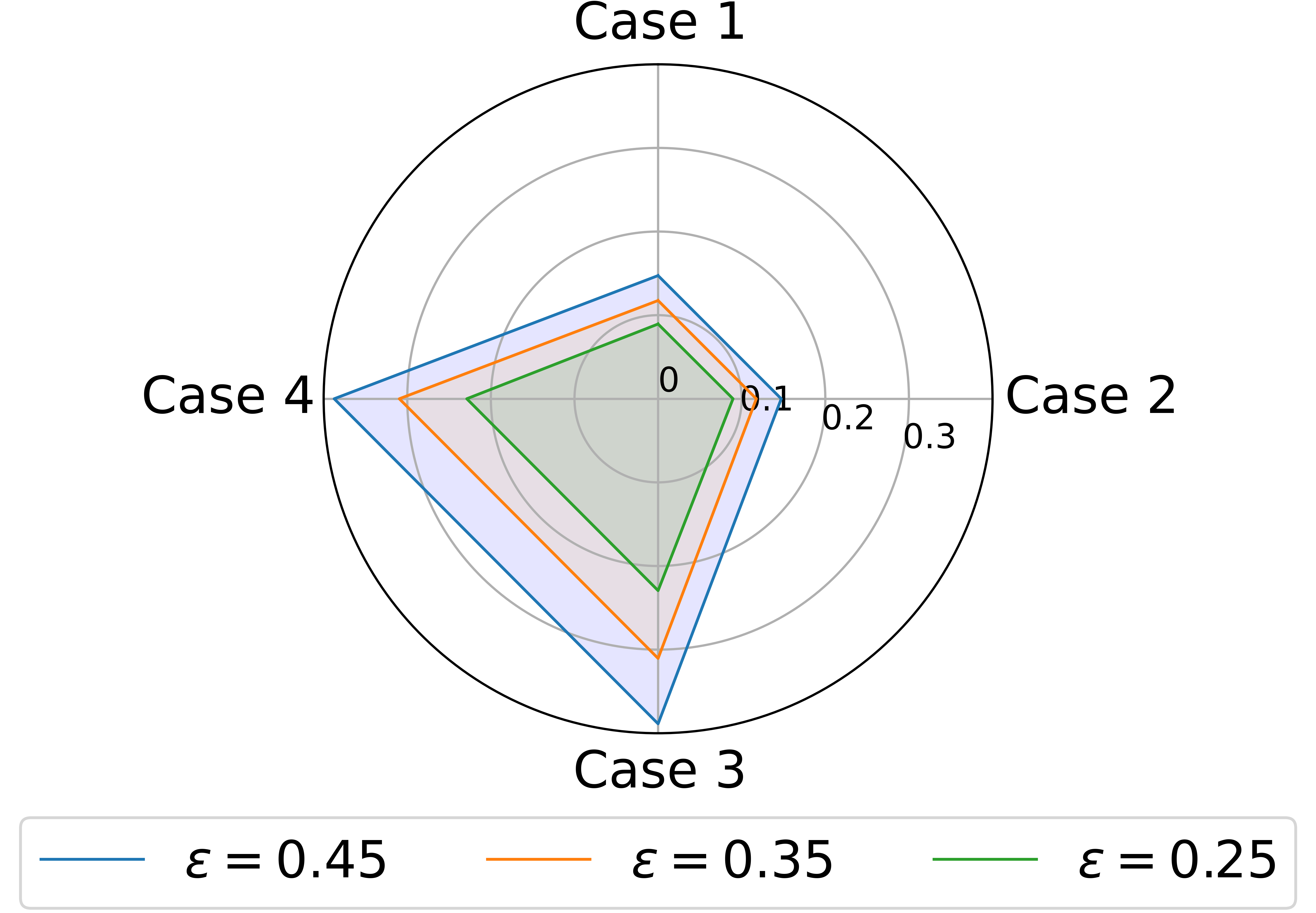}}
    \subfigure[INDI controllers with different $\tau$]{\includegraphics[width=0.24\textwidth]{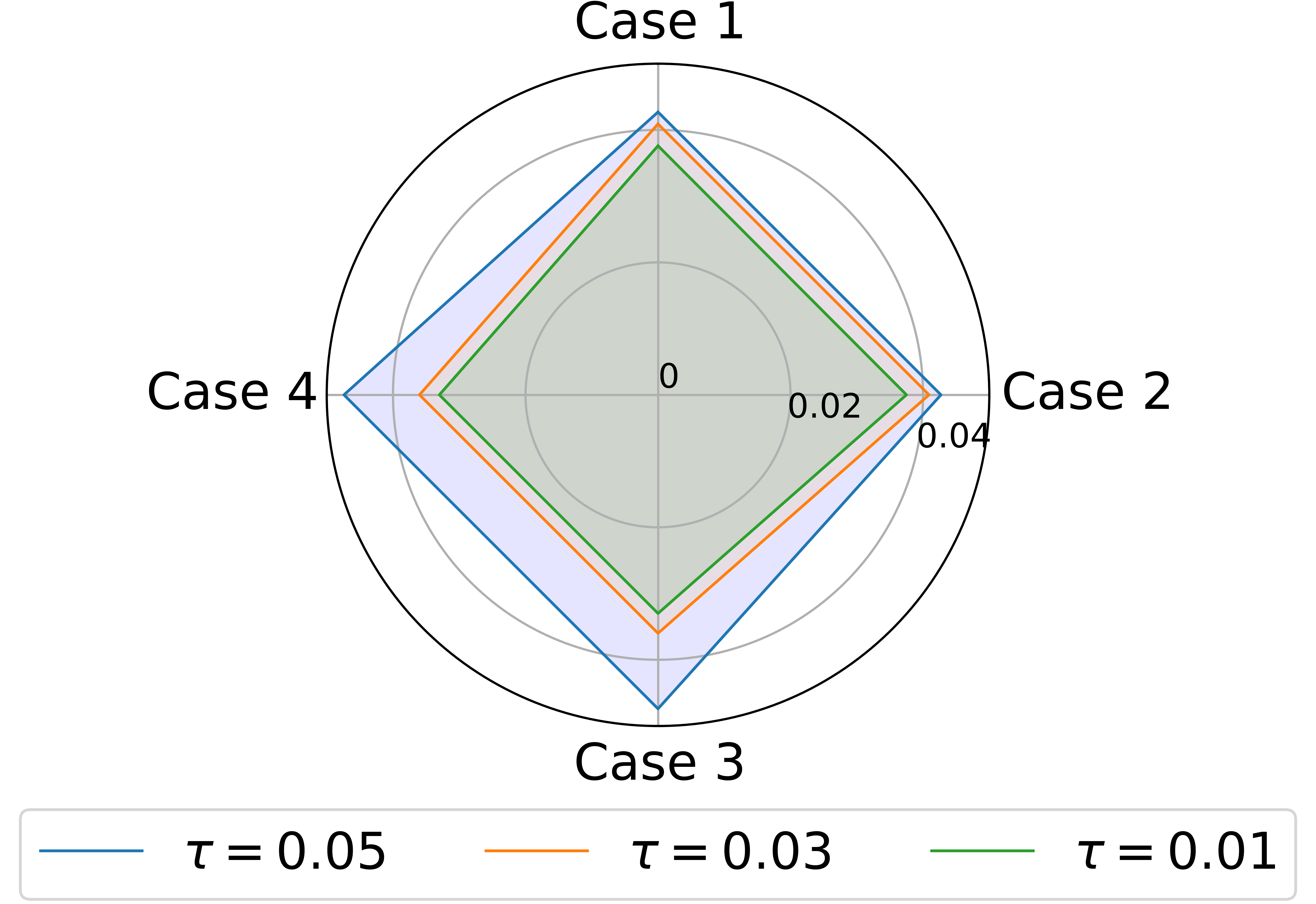}}
    \caption{$\|\boldsymbol{\xi}\|_2$ RMS radar chart: effect of different $\epsilon$ and $\tau$ in vertex cases}
    \label{fig: param_sweep_radar}
    \end{figure}

    Overall, both parameters exhibit a performance--smoothness trade-off: smaller values improve robustness and tracking accuracy, but may also lead to less smooth steering responses.

    \section{Real-Vehicle Implementation and Validation}
    \label{sec veh}
While the broader robustness study is performed in simulation, where speeds, parameter uncertainties, and controller parameters can be safely and extensively varied, an additional real-vehicle test is added here to validate the controllers on real hardware under realistic sensing, actuation, and timing constraints. Because systematically altering tire cornering stiffness—the main source of lateral-dynamics uncertainty \cite{peng1990lateral}—is difficult in practice, and moderate payload variations have only a minor effect, this experiment focuses on practical implementability and path-tracking performance.

The controllers are experimentally validated on a Chevrolet Bolt EUV. As shown in Fig.~\ref{fig: real_vehicle_platform}(a), the vehicle is equipped with LiDAR, cameras, and GPS antennas; an onboard IMU is also used. Computation is performed on an Intel M50FCP server with 64 GB RAM and dual Intel Xeon Gold 5420+ processors, running a ROS2 Humble software stack. The vehicle’s nominal parameters \cite{opendbc_bolt_params} and the corresponding parameter bounds used by the controllers are summarized in Table~\ref{table: real_vehicle_parameters}.
\begin{figure}[htb!]
    \centering
    \subfigure[Chevrolet Bolt EUV]{
    \begin{minipage}[t]{0.24\textwidth}
        \centering
        \includegraphics[width=1\textwidth]{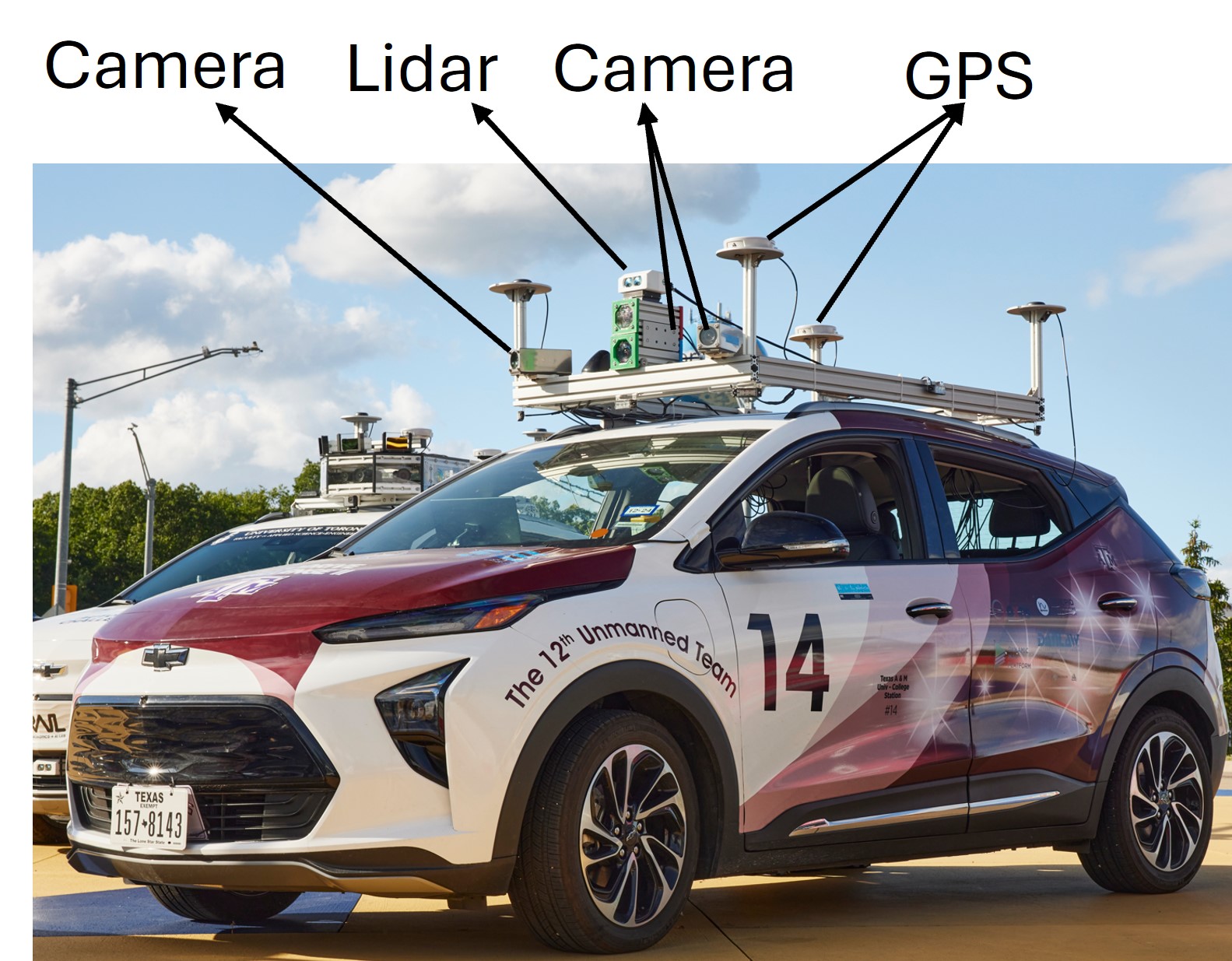}
    \end{minipage}
}\hfill
    \subfigure[Schematic of RELLIS test site]{
    \begin{minipage}[t]{0.2\textwidth}
        \centering
        \includegraphics[width=0.43\textwidth]{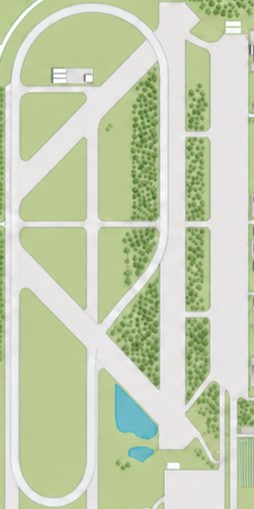}
    \end{minipage}
}
    \caption{Vehicle platform and test site}
    \label{fig: real_vehicle_platform}
\end{figure}
\begin{table}[ht]
\centering
\caption{Parameters of the Chevrolet Bolt EUV}
\resizebox{1\linewidth}{!}{
\begin{tabular}{cccccc}
\hline
\hline
\textbf{Parameter} & \textbf{Symbol} & \textbf{Unit} & \textbf{Nominal} & \makecell{\textbf{Min}\\\textbf{(relative)}} & \makecell{\textbf{Max}\\\textbf{(relative)}} \\
\hline
Mass & $m$ & kg & 1805 & 1 & 1.3 \\
Moment of Inertia & $I_z$ & kg$\cdot$m$^2$ & 2720 & 1 & 1.3 \\
Front Cornering Stiffness & $C_{f}$ & N/rad & 237230 & 0.2 & 1.8 \\
Rear Cornering Stiffness & $C_{r}$ & N/rad & 250009 & 0.2 & 1.8 \\
C.G. to front axle & $a$ & m & 1.055  & -- & -- \\
C.G. to rear axle & $b$ &m & 1.583  & -- & -- \\
\hline
\hline
\label{table: real_vehicle_parameters}
\end{tabular}}
\end{table}

The experiment is conducted at TAMU's RELLIS campus, as shown in Fig. \ref{fig: real_vehicle_platform}(b). The controller parameters are adjusted on the vehicle until further changes no longer improve tracking accuracy without causing noticeable oscillations; the final values are listed in Table \ref{table: veh_control_praram}. For safety, the speed profile in Fig. \ref{fig: vel_prof} is scaled down by a factor of 7, yielding a maximum speed of \(4.28\,\mathrm{m/s}\). Due to space constraints, the reference path in Fig. \ref{fig: track} is scaled by the same factor while maintaining its heading profile and shape, leading to a peak curvature around \(0.085\,\mathrm{m}^{-1}\), comparable to urban driving conditions.
\begin{table}[ht]
\centering
\scriptsize
\caption{Controller Parameters for the Real-Vehicle Experiment}
\begin{tabular}{c c c c| c c c| c c c}
\hline\hline
\multicolumn{4}{c|}{\textbf{LR Controller}} & \multicolumn{3}{c|}{\textbf{INDI Controller}}  & \multicolumn{3}{c}{\textbf{FF Controller}} \\
\hline
$k_1$ & $k_2$ & $\epsilon$ & $\rho_0$ & $k_1$ & $k_2$ & $\tau$ & $k_e$ & $k_\theta$ & $k_{\dot\theta}$\\
11.52  & 1.54  & 0.25       & 0.50      & 0.12  & 4.20 & 0.01 & 0.30&0.15 &0.08 \\
\hline\hline
\end{tabular}
\label{table: veh_control_praram}
\end{table}

Figure \ref{fig: real_vehicle_results} shows the real-vehicle experimental results. Overall, the proposed LR and INDI controllers achieve better tracking accuracy than the FF controller, with INDI yielding the smallest overall error in Fig. \ref{fig: real_vehicle_results}(a). Their advantage is especially evident in high-curvature sections at \(t\approx 32\text{--}48\,\mathrm{s}\), around \(t\approx 70\,\mathrm{s}\), and at \(t\approx 80\text{--}103\,\mathrm{s}\). Notably, at lower speeds, FF exhibits worse tracking performance than at higher speeds, with a prolonged large-error segment over \(t\approx 80\text{--}90\,\mathrm{s}\) and a peak exceeding \(0.6\,\mathrm{m}\) near \(t\approx 95\,\mathrm{s}\). Meanwhile, LR and INDI maintain more consistent performance over the entire test, highlighting the benefit of incorporating longitudinal motion awareness into the design. These experiments confirm that the proposed controllers can be implemented on real hardware and deliver superior path-tracking performance in practice.
 \begin{figure}[h]
    \centering
    {\includegraphics[width=0.49\textwidth]{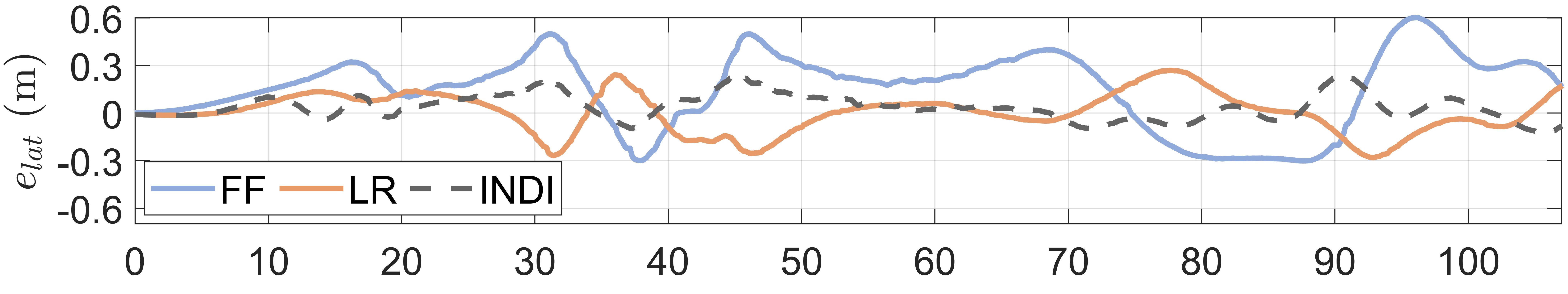}}
    {\includegraphics[width=0.49\textwidth]{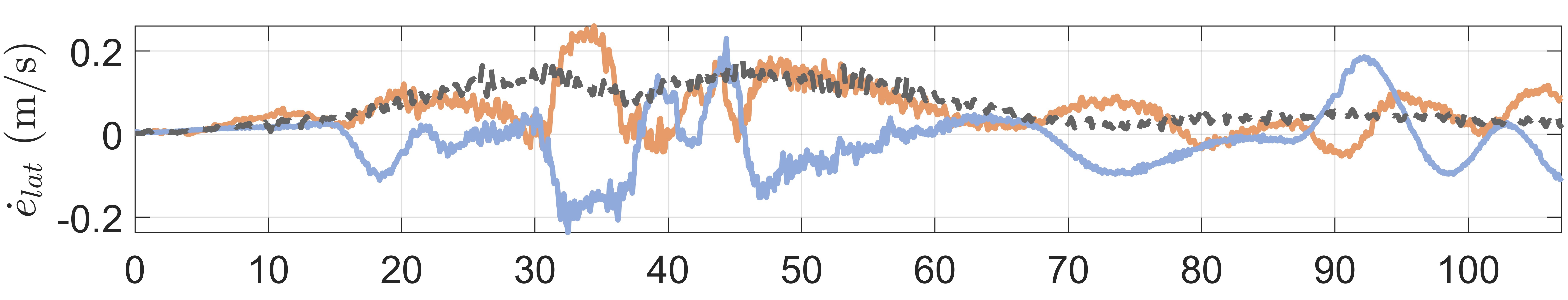}}
{\includegraphics[width=0.49\textwidth]{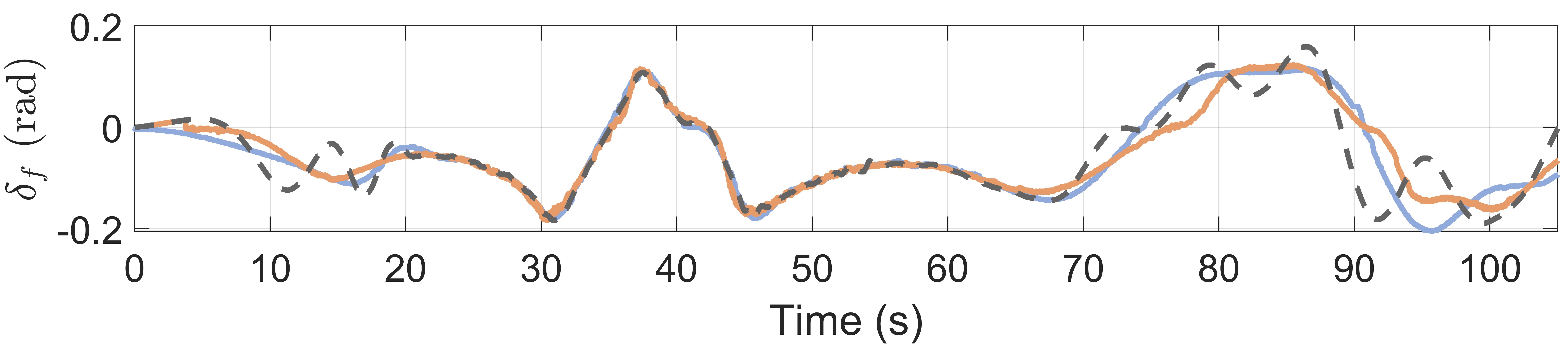}}
    \caption{Real-vehicle experiment results. Top to bottom: (a) lateral error, (b) lateral error rate, (c) steering angle.}
    \label{fig: real_vehicle_results}
    \end{figure}

     \section{Conclusion}
    \label{sec6}
This study proposed a robust nonlinear lateral control framework for AVs that explicitly accounts for longitudinal motion. A tracking error model was derived to capture the effects of varying speed and acceleration. Using this model, feedback linearization produced a control law that directly uses real-time longitudinal motion information. The internal dynamics were examined to ensure overall system stability.

To address parametric uncertainties, two alternative robustification strategies were proposed: a sliding-mode--inspired LR controller and an INDI controller. Both were analytically proven to guarantee robustness in terms of ultimate boundedness. The LR controller employs a novel structure based on an equivalent control term, while the INDI analysis differs from prior work by guaranteeing robustness without Jacobian linearization or assuming globally bounded perturbations. Together, these methods give practitioners flexibility in trading off parameter knowledge against measurement availability.

Simulation results verified the effectiveness of the proposed methods. Incorporating longitudinal motion awareness enhanced tracking accuracy and consistency under varying speeds and accelerations, and both controllers exhibited improved robustness, with INDI performing best overall. As anticipated, robustness increased for the LR and INDI controllers as $\epsilon$ and $\tau$ were reduced, respectively. Real-vehicle experiments further showed that the controllers can be practically implemented and improve path-tracking performance. Overall, the results highlight the benefits of integrating longitudinal motion awareness with robust design for AV lateral control.

\section*{Acknowledgment}
The authors thank General Motors and SAE International for sponsoring the SAE AutoDrive Challenge II, which supplied the Chevrolet Bolt EUV platform and inspired this work.

\bibliographystyle{IEEEtran}
\bibliography{IEEEabrv,root}

\end{document}